\documentclass[11pt,a4paper,final]{article}
\pdfoutput=1
\usepackage[utf8]{inputenc}
\usepackage{amsmath}
\usepackage{graphicx}
\usepackage{xcolor}
\usepackage{eucal}
\usepackage{xfrac}
\usepackage[left=1in,right=1in,top=1in,bottom=1in]{geometry}
\usepackage{amsthm}

\usepackage{XCharter}
\usepackage[xcharter,vvarbb]{newtxmath}

\usepackage{booktabs} 
\usepackage[ruled]{algorithm2e} 
\usepackage{tikz}
\usetikzlibrary{arrows}
\usetikzlibrary{shapes.multipart}
\usepackage{caption}

\SetAlFnt{\small}
\SetAlCapFnt{\small}
\SetAlCapNameFnt{\small}
\SetAlCapHSkip{0pt}
\IncMargin{-\parindent}
\usepackage[numbers,sort&compress]{natbib}

\usepackage[T1]{fontenc}    
\usepackage{hyperref}       
\newcommand{\declarecolor}[2]{\definecolor{#1}{RGB}{#2}\expandafter\newcommand\csname #1\endcsname[1]{\textcolor{#1}{##1}}}

\declarecolor{White}{255, 255, 255}
\declarecolor{Black}{0, 0, 0}
\declarecolor{Maroon}{128, 0, 0}
\declarecolor{Coral}{255, 127, 80}
\declarecolor{Red}{182, 21, 21}
\declarecolor{LimeGreen}{50, 205, 50}
\declarecolor{DarkGreen}{0, 90, 0}
\declarecolor{Navy}{0, 0, 128}

\definecolor{plotblue}{HTML}{377eb8}
\definecolor{plotorange}{HTML}{ff7f00}
\definecolor{plotgreen}{HTML}{4daf4a}

\hypersetup{
    colorlinks=true,
    citecolor=DarkGreen,
    linkcolor=Navy,
    filecolor=magenta,      
    urlcolor=black,
    pdfpagemode=FullScreen,
}
\usepackage{url}            
\usepackage{booktabs}       
\usepackage{nicefrac}       
\usepackage{microtype}      
\usepackage{mathtools}
\usepackage{bbm}
\usepackage{wrapfig}
\usepackage{floatrow}
\usepackage{enumitem}
\usepackage{authblk}
\usepackage{cleveref}
\usepackage{comment}
\usepackage{parskip}

\usepackage{color-edits}
\addauthor{kh}{red}

\usepackage{caption}
\usepackage{subcaption}
\usepackage{thm-restate}

\newtheoremstyle{spaced}
  {0.5em}          
  {0em}          
  {\normalfont}  
  {}             
  {\bfseries}    
  {.}            
  { }            
  {}             

\theoremstyle{spaced}

\newtheorem{theorem}{Theorem}[section]
\newtheorem{lemma}[theorem]{Lemma}

\newtheorem{definition}[theorem]{Definition}

\newcommand{\E}{\mathbb{E}}

\newcommand{\cA}{\mathcal{A}}

\newcommand{\cD}{\mathcal{D}}
\newcommand{\cS}{\mathcal{S}}
\newcommand{\cM}{\mathcal{M}}
\newcommand{\cL}{\mathcal{L}}
\newcommand{\cC}{\mathcal{C}}
\newcommand{\cP}{\mathcal{P}}
\newcommand{\cR}{\mathcal{R}}
\newcommand{\su}{\mathsf{u}}
\newcommand{\sT}{\mathsf{T}}
\newcommand{\sfc}{\mathsf{c}}
\newcommand{\sd}{\mathsf{d}}
\newcommand{\vb}{\mathbf{b}}
\newcommand{\vy}{\mathbf{y}}
\newcommand{\vc}{\mathbf{c}}
\newcommand{\boldv}{\mathbf{v}}

\newcommand{\va}{\mathbf{a}}

\newcommand{\vx}{\mathbf{x}}
\newcommand{\vp}{\mathbf{p}}
\newcommand{\opt}{\mathrm{OPT}}
\newcommand{\vcg}{\mathtt{VCG}}
\newcommand{\child}{\mathsf{c}}

\begin{document}

\title{\textbf{Inference Auctions}}
\author[1]{Keegan Harris\thanks{,$^\dagger$: Denotes equal contribution.}}
\author[2]{Siddharth Prasad\protect\footnotemark[1]}
\author[3]{Asher Trockman\protect\footnotemark[1]}
\author[1]{\\Nika Haghtalab$^\dagger$}
\author[1,4]{Michael I. Jordan$^\dagger$}

\affil[1]{University of California, Berkeley}
\affil[2]{Toyota Technological Institute at Chicago}
\affil[3]{Google Research}
\affil[4]{Inria \& \'Ecole Normale Sup\'erieure}
\affil[ ]{\texttt{\{keegan.harris,nika,michael\_jordan\}@berkeley.edu}}
\affil[ ]{\texttt{sprasad@ttic.edu}, \texttt{trockman@google.com}}

\date{}

\maketitle

\begin{abstract}
    When inference demand exceeds available compute capacity, model providers must decide which requests should be served first. 
Users have different tolerances for delay from an LLM API, but current priority pricing schemes compress these differences into coarse fixed-price service tiers.
We design an \emph{inference auction} that allows users to bid for faster service. Our auction allocates priority in an economically efficient way without sacrificing latency, and we develop fast algorithms for implementing prices that incentivize truthful bidding.
We also design an autobidding agent for our inference auction, where users specify an inference budget and the autobidder dynamically adjusts its bids over time to maximize user utility subject to the budget constraint. 
Experiments validate the practicality of our auction: it increases system welfare while maintaining the cache utilization and latency advantages of SGLang, a state-of-the-art inference serving framework.\looseness-1 
\end{abstract}

\section{Introduction}
Large-scale LLM inference requires allocation of shared compute resources across different users' requests. 
During periods of heavy traffic, model providers need to determine which requests to serve first, and how much to charge for faster service. 
Users can differ substantially in their willingness to pay for earlier service. For example, an interactive request may be sensitive to delays that are inconsequential for a background job. 
Today, most major model providers allocate priority through a small number of fixed-price service tiers (typically two or three), with higher-paying requests receiving priority over lower-paying ones during times of high traffic~\citep[e.g.,][]{anthropic_service_tiers, aws_bedrock_service_tiers, openai_priority_processing, google2026priority, xai2026priority}. 
These coarse tiers give users limited scope to express their latency preferences, 
and as a result, priority is allocated in an economically inefficient way.  

The design of a more expressive and efficient market for inference priority raises new challenges that are absent from conventional queuing systems and transaction-fee markets.
%
Modern inference engines process prompts with the same prefix by reusing the key-value (KV) cache across their shared prefix, so the time it takes to serve a request depends on which requests have already been served~\citep[e.g.,][]{zheng2024sglang,kwon2023efficient, qin2025mooncake}. 
For example, interrupting a batch of requests that all share a long common prefix to serve unrelated requests may cause the common prefix to be evicted from the KV cache, disrupting cache reuse and harming request throughput. Serving that same batch of common-prefix requests contiguously instead will generally yield improved throughput due to better cache utilization.
%
This form of {\em prefix-aware} scheduling and KV cache management is implemented in state-of-the-art inference serving frameworks such as SGLang~\citep{zheng2024sglang}, leading to significant latency and throughput gains. 
A good market design should not sacrifice these performance gains.\looseness-1


%
We design an {\em inference auction} that allows users to bid for priority. 
Our key observation is that preserving the cache-reuse benefits of prefix-aware serving systems still leaves freedom to decide which groups of requests receive priority, enabling significant gains in social welfare while retaining the cache optimality of practical serving systems. 
Our key contributions, in more detail, are as follows:\looseness-1

%
\textbf{Inference Auction Design (Section~\ref{sec:env} and Section~\ref{sec:dfs}).}
We develop the first inference auction, a way for LLM providers to prioritize requests by letting users bid for faster service. 
Each bid represents the user’s reported value for moving a request from the back of the scheduling queue to the front, with value increasing linearly as the request moves earlier in the schedule. Our goal is to maximize social welfare---the total value users derive from their assigned priorities---while retaining cache optimality. 
A key aspect of our inference auction design is to constrain the resulting schedule of requests to be a {\em depth-first-search (DFS) schedule}, that is, it must correspond to a DFS traversal of the \emph{radix tree} of requests (a tree in which requests that share a prompt prefix share a path from the root). Inference auctions corresponding to such DFS schedules maximize KV cache hit rate due to optimal prefix sharing.
We show that the welfare-maximizing DFS schedule corresponds to the DFS traversal that orders a node's children by average bid across requests in their subtrees, and can therefore be computed very efficiently. 
%
%
We pair our scheduling rule with Vickrey–Clarke–Groves (VCG) payments that incentivize truthful bidding.
%
Computing VCG payments naively requires recomputing an optimal schedule for each user, causing a quadratic worst-case dependence on the number of requests. We develop an algorithm that computes all payments in quasilinear run-time by decomposing each user’s welfare externality into local changes along the radix tree.
\looseness-1

\textbf{Autobidding Agents for Inference Auctions (Section~\ref{sec:autobidding}).} 
Inference users typically submit many requests over time and must manage their inference budget. 
Even when truthful reporting is optimal in each auction in isolation, users still need to choose how aggressively to bid across requests so that their inference budget is spent where it is most valuable. 
%
To address this, we design an autobidding agent that adjusts a user's bids over time to maximize their cumulative utility, subject to a budget constraint. 
Our autobidding algorithm scales each request's value by a multiplier and adjusts this multiplier over time using gradient descent. 
We show that this strategy achieves $O(\sqrt{T})$ regret relative to the optimal-in-hindsight bidding policy on a sequence of $T$ inference auctions. 
We also characterize the welfare loss when users have different pacing multipliers:  
we show that welfare only decreases when these differences reverse the ordering of requests in the DFS schedule, and we bound the resulting inefficiency by the variation in pacing multipliers across the radix subtrees.

\textbf{Experiments (Section~\ref{sec:expts}).} 
We implement our inference auction as a scheduling layer in front of an unmodified SGLang server and evaluate it across several workloads with various shared prefix structures. 
Compared to a bid-blind, cache-optimal request ordering, our mechanism improves overall welfare by 12--18.6\% under i.i.d. request valuations and up to $100\%$ when valuations are non-i.i.d., all while leaving cache hit rate and average time-to-first-token virtually unchanged. 
Simultaneously, we find that our mechanism enjoys $\mathord{\sim}80\%$ of the welfare of the unconstrained schedule ordered by decreasing bids. 
In contrast, this unconstrained schedule sharply reduces cache hit rates, which results in an increase in average latency by as much as $12\times$.\looseness-1 
\section{Cache-Optimal Schedules and Our Bidding Environment}\label{sec:env}

Serving an autoregressive language model has two stages. 
During \emph{prefill}, the model processes a request's prompt and stores its keys and values in the KV cache. 
During \emph{decoding}, it generates output tokens while reusing the cached keys and values. 
Modern inference engines interleave the two stages across requests, typically prioritizing the decode steps of requests already in progress. 
Our mechanism determines the order in which queued requests are admitted for prefill. 

The KV cache entries for a prompt prefix can be reused by other requests which share that prefix, reducing prefill computation. 
Because cache capacity is limited, this reuse depends on the request scheduling order: processing requests with a long shared prefix consecutively can avoid evicting and recomputing their common entries. 
The inference engine SGLang~\citep{zheng2024sglang} puts this principle into practice by storing shared token prefixes in a radix tree and scheduling requests in the order of a DFS traversal of that tree, which is a longest-prefix-match order and is cache-optimal. 
{\em A key aspect of our auction design is to allow bids to only affect the ordering of children in a DFS traversal of the request radix tree.} This ensures that no matter the precise economics of the auction (e.g., schedules, payments, incentives), we maintain the cache efficiency of prefix-aware scheduling. We now formally introduce the elements of the auction environment.\looseness-1

\textbf{Notation.} 
%
$\cS_n$ denotes the set of all permutations of $\{1, \ldots, n\}$.
For a set of indexed elements $\mathbf{a}_{1:m} \coloneq \{\mathbf{a}_1, \ldots, \mathbf{a}_m\}$, we write $\mathbf{a}_{-i} \coloneq \mathbf{a}_{1:m} \setminus \{\mathbf{a}_i\}$ to denote $\mathbf{a}_{1:m}$ with element $i$ excluded.\looseness-1 

\textbf{Requests and Scheduling Batches.}
We consider a setting where a model provider is serving requests to a single LLM on one inference worker. 
Requests arrive asynchronously and are periodically grouped into scheduling batches of $n$ queued requests. 
We treat the formation of these batches as exogenous: our mechanism determines the relative prefill priority of requests within a batch, but not which requests enter the batch.\footnote{In practice, batches would be formed only during times of high traffic when requests start to queue.}\looseness-1 

Let $\cR \coloneq \{1, \ldots, n\}$ be the requests in a batch. 
There are $m \leq n$ inference users who may each submit multiple requests. 
We use $\cR_i \subseteq \cR$ to denote the set of requests submitted by user $i$, where $\bigcup_{i=1}^m \cR_i = \cR$ and $\cR_i\cap\cR_j=\emptyset$ for all $i \neq j$. 
A \emph{schedule} is a permutation $\sigma \in \cS_n$, where $\sigma(r)$ is the position of request $r$ within the batch. 
Smaller positions receive earlier prefill service, and the scheduler admits requests for prefill according to this ordering. 
Each request is represented by its sequence of prompt tokens. 
Let $M$ denote the total number of prompt tokens across all requests in $\cR$, counting tokens separately for each request. 
The shared-prefix structure of $\cR$ is captured by a radix tree. 

\begin{definition}[Request Radix Tree]
The radix tree for a set of requests is the rooted, edge-labeled tree obtained by merging common prefixes and contracting every non-branching path into a single edge. 
Each request is represented by an auxiliary leaf node (that stores request metadata, e.g., user ID and user bid), and the lowest common ancestor of two request leaves represents their longest shared prefix. $\sT$ denotes the request-augmented radix tree.
For a node $\su$, let $\cC(\su)$ denote its children, $\cL(\su)$ denote the request leaves in the subtree rooted at $\su$, $d(\su)\coloneq |\cC(\su)|$, and $N(\su) \coloneq |\cL(\su)|$. 
The maximum branching factor of $\sT$ is $\nu \coloneq \max_{\su} |\cC(\su)|$. 
See Figure~\ref{fig:radix-tree} for an example.\looseness-1
\end{definition}  

%

Of particular interest are schedules that process all requests in one radix subtree before moving on to another, that is, they correspond to DFS traversals of $\sT$.

\begin{definition}[DFS Schedule]
    A schedule $\sigma$ is a DFS schedule if for every node $\su$, the leaves belonging to each child subtree $\sfc \in \cC(\su)$ appear contiguously in $\sigma$. 
    Let $\cD(\sT) \subseteq \cS_n$ denote the set of DFS schedules for $\sT$. 
\end{definition}
%
%
Under the assumption that the KV cache is large enough to hold the longest request in the batch,
schedules in $\cD(\sT)$ enjoy the optimal cache hit rate by Theorem 3.1 in~\citet{zheng2024sglang}.\looseness-1 

\textbf{Values and Priority.} 
The user who submits request $r$ has a private value $v_r \geq 0$ for receiving earlier service. 
The \emph{fractional priority} received by request $r$ under schedule $\sigma$ is $x_r(\sigma) \coloneq \frac{|s \in \cR : \sigma(s) > \sigma(r)|}{n-1}$. 
The first request has fractional priority $1$, the last has fractional priority $0$, and a request scheduled ahead of an $\alpha$-fraction of the batch receives priority $\alpha$. 
Let $\boldv_i \coloneq [v_r]_{r \in \cR_i}$ denote the vector of user $i$'s values for his requests and $\vx_i(\sigma) \coloneq [x_r(\sigma)]_{r \in \cR_i}$ be the vector of the corresponding fractional priorities under $\sigma$. 
User $i$'s total value under $\sigma$, interpreted as his maximum willingness-to-pay for fractional priority $\vx_i(\sigma)$, is $V_i(\sigma) \coloneq \langle \boldv_i, \vx_i(\sigma) \rangle$ and the \emph{welfare} of $\sigma$ is $W(\sigma; \boldv_{1:m}) \coloneq \sum_{i=1}^m V_i(\sigma)$. 

Fractional priority provides a simple rank-based proxy for queueing delay and the resulting time-to-first-token (TTFT). 
{\em Modeling value as being linear in fractional priority is a design choice to keep the bidding language simple}: users only need to report how much they value moving a request from the back of the scheduling batch to the front, rather than specify a potentially complex value function over TTFT. 
Our experiments show that bidding according to fractional priority leads to schedules that closely track realized TTFT, supporting fractional priority as a practical proxy for latency.\looseness-1

\textbf{Bids, Schedules, and Payments.} 
Each request $r$ has a corresponding user bid $b_r \geq 0$, interpreted as the user's reported value for $r$ receiving full priority. 
Let $\vb_i \coloneq [b_r]_{r \in \cR_i}$ be the vector of user $i$'s bids. 
An inference auction is a mechanism which maps bids to a DFS schedule and user payments. 

\begin{definition}[Inference Auction]
    Given a radix tree $\sT$, an inference auction consists of (1) a \emph{scheduling rule} $A : \mathbb{R}_+^n \rightarrow \cD(\sT)$ mapping bids to DFS schedules and (2) a \emph{payment rule} $p : \mathbb{R}_+^n \rightarrow \mathbb{R}_+^m$ mapping bids to payments, where $p_i(\vb_{1:m})$ is the payment charged to user $i$. 

    {\em Any inference auction enjoys optimal cache utilization by Theorem 3.1 of~\citet{zheng2024sglang}.}
\end{definition}

We make the standard assumption that users have quasilinear utility, so a user with true values $\boldv_i$ obtains utility $V_i(A(\vb_{1:m})) - p_i(\vb_{1:m})$ in an inference auction with scheduling rule $A$ and payment rule $p$. 
Our goal in the sequel is to design an inference auction that maximizes welfare over $\cD(\sT)$ while incentivizing truthful bidding. 
A simple symmetry argument shows that the welfare loss due to restricting to $\cD(\sT)$ is no more than $50\%$.\footnote{At the same time, the welfare-maximizing DFS schedule can improve significantly over an arbitrary schedule in $\cD(\sT)$. 
As an example, consider the setting where one request has value $v^\star > 0$ and shares no common prefix with the $n-1$ other requests, each of which has value $0$ and a common shared prefix. 
A uniformly random DFS schedule would obtain zero welfare with probability $1/2$, whereas the optimal DFS schedule always achieves the maximum welfare $v^\star$.}

\begin{restatable}{proposition}{propwelfare}\label{prop:dfs-welfare-loss}
    For any $\boldv_{1:m}$, $\max_{\sigma\in\cD(\sT)}W(\sigma;\boldv_{1:m})\ge\frac{1}{2}\max_{\sigma\in\cS_n}W(\sigma;\boldv_{1:m})$.
\end{restatable}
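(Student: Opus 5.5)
The plan is a symmetry argument: a DFS schedule and its reversal (in the sense of reversing the order of children at every node) together give every request total priority exactly $1$, so one of the two has welfare at least half the sum of all values.

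\textbf{Step 1: reversal pairs.} Fix any DFS schedule $\sigma\in\cD(\sT)$. Define $\bar\sigma$ as the DFS traversal obtained by reversing the child order at every internal node of $\sT$. Equivalently, $\bar\sigma(r)=n+1-\sigma(r)$ for all $r$. To check this, take any two leaves $r,s$ with lowest common ancestor $\su$. Their relative order in a DFS schedule is determined solely by the order of the two children of $\su$ containing them. Reversing every child order therefore flips every pairwise order, so $\bar\sigma$ is the full reversal of $\sigma$ and is again a DFS schedule, since contiguity of child subtrees is preserved under reversal.

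\textbf{Step 2: priorities sum to one.} For each request $r$, the number of requests after $r$ in $\sigma$ equals the number before $r$ in $\bar\sigma$. Hence
\[
x_r(\sigma)+x_r(\bar\sigma)=\frac{(n-\sigma(r))+(\sigma(r)-1)}{n-1}=1 .
\]
Summing with weights $v_r$ gives $W(\sigma;\boldv_{1:m})+W(\bar\sigma;\boldv_{1:m})=\sum_{r\in\cR}v_r$. Therefore
\[
\max_{\sigma\in\cD(\sT)}W(\sigma;\boldv_{1:m})\ge \tfrac12\sum_r v_r .
\]

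\textbf{Step 3: comparison with the unconstrained optimum.} Every $x_r(\sigma)\le 1$ and $v_r\ge 0$, so $\max_{\sigma\in\cS_n}W(\sigma;\boldv_{1:m})\le\sum_r v_r$. Combining this with Step 2 proves the proposition. The case $n=1$ is trivial: set the priority to $1$ and the claim holds directly.

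\textbf{Main obstacle.} The only nontrivial point is Step 1, namely that reversing all child orders yields exactly the reversed permutation and that this permutation is a DFS schedule. The pairwise-order argument via the lowest common ancestor handles both.
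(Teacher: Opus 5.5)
Your proposal is correct and follows essentially the same route as the paper's proof. You pair each DFS schedule with its full reversal, which is again a DFS schedule obtained by reversing every child ordering; you use $x_r(\sigma)+x_r(\bar\sigma)=1$ to get $W(\sigma;\boldv_{1:m})+W(\bar\sigma;\boldv_{1:m})=\sum_r v_r$; and you bound the unconstrained optimum by $\sum_r v_r$. Your lowest-common-ancestor argument just spells out why the reversal is a DFS schedule, a point the paper states in one line.
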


\section{Welfare-Optimal Inference Auctions}\label{sec:dfs}

We now design a welfare-maximizing (or {\em economically efficient}) inference auction that is simple and tractable. 
In Section~\ref{sec:welfare}, we design a scheduling rule that maximizes reported user welfare. 
The scheduling rule admits a clean algorithm: at every node of the radix tree, visit its children in decreasing order of their average sub-tree bid. 
In Section~\ref{sec:VCG}, we implement VCG prices that incentivize truthful bidding. 
Although these payments are defined through a separate counterfactual optimization for each user, we show that they can be computed in quasilinear total run-time. 
Taken together, our scheduling and payment rules maximize \emph{true} user welfare and can be implemented with very little overhead compared to the time it takes to construct and traverse the radix tree.\footnote{Inference frameworks such as SGLang already construct and traverse the radix tree to implement prefix-aware scheduling. We only incur an additional $O(\log \nu)$ multiplicative factor.\looseness-1} 

\subsection{Implementing Welfare-Maximizing DFS Schedules}\label{sec:welfare}

Since we restrict to DFS schedules, the scheduler's only decision is how to order the children of each internal node in the radix tree. 
For a node $\su$, let $F_b(\su) \coloneq \sum_{r \in \cL(\su)} b_r$ denote the sum of bids in the subtree rooted at $\su$. 
We will show that reported welfare is maximized by visiting the children of every internal node in decreasing order of $F_b(\su) / N(\su)$. 
%

For intuition as to why this is the case, consider two children $\sfc$ and $\sd$ of the same internal node. 
If $\sfc$ is visited before $\sd$, every request in $\cL(\sfc)$ is scheduled ahead of every request in $\cL(\sd)$, and these cross-subtree pairs contribute $\frac{N(\sd) F_b(\sfc)}{n-1}$ to reported welfare. 
Visiting $\sd$ before $\sfc$ instead contributes $\frac{N(\sfc) F_b(\sd)}{n-1}$. 
It is therefore welfare-improving to place $\sfc$ before $\sd$ whenever $N(\sd) F_b(\sfc) \geq N(\sfc) F_b(\sd)$, or equivalently when $F_b(\sfc) / N(\sfc) \geq F_b(\sd) / N(\sd)$. 
The following theorem shows that applying this local rule independently at every branch produces a globally-optimal DFS schedule. 

\begin{figure}[t]
\centering
\begin{minipage}[t]{0.40\linewidth}
\vspace{0pt}
\begin{algorithm}[H]
\SetAlgoLined
\SetAlgoNoEnd
Let $\su$ be the root of the radix tree $\sT$\\
Initialize $L\gets[\,]$\\
$\mathrm{DFS}(\su,L)$\\
\textbf{return} $L$

\caption{Average-Bid DFS Scheduling Rule $\Bar{A}(\vb_{1:m})$}
\label{alg:auction}
\end{algorithm}
\end{minipage}%
\hfill
\begin{minipage}[t]{0.58\linewidth}
\vspace{0pt}
\begin{algorithm}[H]
\SetAlgoLined
\SetAlgoNoEnd

\If{$\su$ \textup{is a leaf}}{
    $L\mathtt{.append}(\su)$
}

$[\mathsf{c}_1,\ldots,\mathsf{c}_{d(\su)}]
\gets\cC(\su)$ sorted in descending order of $F_b/N$

\For{$k=1,\ldots,d(\su)$}{
    $\mathrm{DFS}(\mathsf{c}_k,L)$
}

\caption{$\mathrm{DFS}(\su,L)$}
\label{alg:dfs}
\end{algorithm}
\end{minipage}
\end{figure}
\begin{restatable}{theorem}{thmwelfare}\label{thm:welfare}
    Given bid vectors $\vb_{1:m}$, the average-bid DFS scheduling rule $\Bar{A}(\vb_{1:m})$ given by Algorithm~\ref{alg:auction} returns a cache-optimal schedule $\sigma$ that maximizes $W(\sigma;\vb_{1:m})$ among all DFS schedules. Furthermore, this schedule can be computed in $O(M + n\log \nu)$ expected run-time. 
\end{restatable}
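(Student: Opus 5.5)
The plan is to write the welfare of a DFS schedule as a sum of pairwise contributions and to group the pairs by their lowest common ancestor. For any schedule $\sigma$ we have
\[
(n-1)\,W(\sigma;\vb_{1:m}) \;=\; \sum_{r}\, b_r\,\bigl|\{s:\sigma(s)>\sigma(r)\}\bigr| \;=\; \sum_{\{r,s\}} \bigl(b_r\mathbbm{1}[\sigma(r)<\sigma(s)] + b_s\mathbbm{1}[\sigma(s)<\sigma(r)]\bigr).
\]
Each unordered pair of distinct request leaves $\{r,s\}$ has a unique lowest common ancestor $\su$. Its two leaves lie in distinct child subtrees $\sfc,\sd\in\cC(\su)$. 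In a DFS schedule $\sigma$, the leaves of each child subtree are contiguous, so the relative order of $r$ and $s$ is determined entirely by the order in which $\sigma$ visits $\sfc$ and $\sd$ among the children of $\su$.

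\textbf{Step 1: decomposition.} A DFS schedule is in bijection with a choice of child ordering $\pi_\su$ at every internal node. Grouping pairs by their lowest common ancestor gives
\[
(n-1)W(\sigma;\vb_{1:m})=\sum_{\su}\ \sum_{\sfc\prec_{\pi_\su}\sd} N(\sd)\,F_b(\sfc),
\]
where the inner sum runs over ordered pairs of children of $\su$ with $\sfc$ placed before $\sd$. The term for node $\su$ depends only on $\pi_\su$. Since the orderings can be chosen independently across nodes, the maximization separates into independent problems, one per internal node.

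\textbf{Step 2: local optimum via an exchange argument.} At a fixed node we maximize $\sum_{\sfc\prec\sd}N(\sd)F_b(\sfc)$ over orderings of the children. This is exactly the weighted-completion-time (Smith's rule) structure. Take an ordering and swap two adjacent children $\sfc,\sd$. Only their mutual term changes, from $N(\sd)F_b(\sfc)$ to $N(\sfc)F_b(\sd)$. So whenever an adjacent pair violates the order $F_b(\sfc)/N(\sfc)\ge F_b(\sd)/N(\sd)$, swapping it weakly improves the objective. Bubble-sorting any ordering into decreasing-average order therefore never decreases the objective, so that order is optimal. Ties give equal objective values, so every tie-breaking rule is optimal. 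Combining Steps 1 and 2, Algorithm~\ref{alg:auction} attains the maximum at every node simultaneously and hence globally. Since its output is in $\cD(\sT)$, cache optimality follows from Theorem 3.1 of \citet{zheng2024sglang}.

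\textbf{Step 3: running time.} Building the radix tree takes $O(M)$ time, in expectation if the child edges are stored in hash maps keyed by the first token. A single post-order pass then computes $F_b$ and $N$ at all nodes in time linear in the tree size, which is $O(n)$ after contracting non-branching paths. Sorting the children at node $\su$ costs $O(d(\su)\log d(\su))\le O(d(\su)\log\nu)$. Since $\sum_\su d(\su)$ equals the number of edges, which is $O(n)$ because every internal node branches, the total cost is $O(M+n\log\nu)$.

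The main obstacle is Step 1. It requires showing rigorously that the DFS constraint makes cross-subtree pairs depend only on the ordering at their lowest common ancestor, and that the space of DFS schedules is exactly the product of the per-node orderings. I would verify this bijection directly from the contiguity definition of $\cD(\sT)$.
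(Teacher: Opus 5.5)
Your proposal is correct and follows essentially the same route as the paper's proof. Both use the pairwise decomposition $(n-1)W=\sum_{\su}\sum_{\sfc\prec\sd}N(\sd)F_b(\sfc)$ grouped by lowest common ancestor, an adjacent-swap exchange argument at each node, and the same edge-counting bound $\sum_{\su} d(\su)\log d(\su)=O(n\log\nu)$ for the run-time. You also make rigorous two points the paper leaves implicit: the bijection between $\cD(\sT)$ and per-node child orderings, and the handling of ties by bubble-sorting toward the algorithm's particular sorted order.
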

\noindent Figure~\ref{fig:avg-bid-dfs} illustrates an example of an average-bid DFS schedule. 

\subsection{Computing VCG Payments}\label{sec:VCG}

The average-bid DFS of Algorithm~\ref{alg:auction} yields a schedule that maximizes reported welfare, but our ultimate goal is to maximize welfare with respect to users' true values. 
%
%
\citet{Vickrey61:Counterspeculation}-\citet{Clarke71:Multipart}-\citet{Groves73:Incentives} (VCG) payments
ensure this property  
%
by charging each user for the welfare loss that his bids impose on the other users.  
When paired with a scheduling rule that maximizes reported welfare over a fixed set of feasible outcomes, these payments make truthful bidding (i.e., bidding $\vb_i = \boldv_i$) an optimal strategy for users with quasilinear utility. 
We instantiate the VCG mechanism with average-bid DFS schedules and $\cD(\sT)$ as the feasible set: 

\begin{definition}[VCG payment rule]
    Let $h_i(\vb_{-i}) = \max_{\sigma\in\cD(\sT)} W(\sigma; (\mathbf{0}, \vb_{-i}))$ denote the maximum reported welfare the other users could obtain if user $i$'s bids were set to zero while its requests remain in the radix tree.\footnote{The VCG counterfactual $h_i(\vb_{-i})$ is often described informally as removing user $i$ from the auction. In our setting, removing a user's requests would change the radix tree and hence the set of feasible schedules. In fact, this (incorrect) version of VCG would require the platform to pay some users (see Appendix~\ref{app:dfs}). We instead fix the set of requests and assign user $i$ his weakest possible type, the zero bid vector.} 
    The VCG payment rule charges user $i$ 
    \begin{equation*}
        p_i^{\vcg}(\vb_{1:m}) = h_i(\vb_{-i}) - \sum_{j\neq i} \langle \vb_{j}, \vx_{j}(\Bar{A}(\vb_{1:m}))\rangle.  
    \end{equation*}
\end{definition}
The inference auction with average-bid DFS schedules and VCG payments satisfies the following properties:\looseness-1 
\begin{enumerate}
    \item \textbf{Efficiency:} Under truthful bidding, true welfare $W(\sigma; \boldv_{1:m})$ is maximized over DFS schedules.\looseness-1 
    \item \textbf{Incentive compatibility (IC):} $V_i(\Bar{A}(\boldv_i, \vb_{-i})) - p_i^{\vcg}(\boldv_i, \vb_{-i}) \geq V_i(\Bar{A}(\vb_i, \vb_{-i})) - p_i^{\vcg}(\vb_i,\vb_{-i})$ for all $\boldv_i, \vb_i, \vb_{-i}$. This makes truthful bidding an optimal strategy. 
    \item \textbf{Individual rationality (IR):} $V_i(\Bar{A}(\boldv_i, \vb_{-i})) - p_i^{\vcg}(\boldv_i, \vb_{-i}) \geq 0$ for all $\boldv_i,\vb_{-i}$.
    \item \textbf{No positive transfers:} $p_i^{\vcg}(\vb_{1:m}) \geq 0$ for all bid vectors $\vb_{1:m}$. 
\end{enumerate}
Furthermore by~\citet{krishna1998efficient}, these payments maximize the model provider's revenue among all IC and IR mechanisms that implement the welfare-maximizing DFS schedule.\looseness-1 

\textbf{Efficient Payment Computation.} 
Naively computing VCG payments for each user consists of repeatedly zeroing out that user's bids and rerunning Algorithm~\ref{alg:auction}. 
This approach takes $O(mn\log\nu)$ time, which is quadratic in $n$ if each user has only a single request. 
While such a quadratic dependence on $n$ is unavoidable if the goal is to return the welfare-optimal DFS traversal when each user is excluded, we are able to bypass this barrier since we only need the optimal welfare values. 
As a result, our algorithm enjoys an improved runtime of $O(\min\{M, mn\} \log \nu)$.\looseness-1

The key idea behind our algorithm (Appendix~\ref{app:vcg_alg}) is easiest to see when each user submits a single request. 
%
Suppose we set the bid of request $r$ to zero. 
At every internal node on the path from the root to $r$'s leaf, only the average bid of the child containing $r$ changes. 
This average can only decrease, and so the affected child can only move later in the welfare-optimal traversal while all other children retain their relative order. 
Siblings that remain on the same side of the affected child do not contribute to the user's payment, while each sibling that overtakes it contributes a local externality. 
Since the children are already sorted by average bid from the average-bid DFS schedule, a binary search identifies the siblings that overtake the affected child. 
Summing these local effects via pre-computed prefix sums over the root-to-leaf path then gives the user's VCG payment.\looseness-1 

When a user submits multiple requests, zeroing out all of his bids may affect multiple children of a given internal node. 
However the same principle still applies: jointly reorder the affected children according to their modified average bids and merge them with the unchanged children. 
\begin{restatable}{theorem}{thmprices}\label{thm:payments}
    Given as input the processed radix tree $\sT$, where at every node $\su$ (1) $F_b(\su)$ and $N(\su)$ are stored and (2) its children are sorted in decreasing order of average bid, all VCG payments $p_1^{\vcg}(\vb_{1:m}),\ldots, p_m^{\vcg}(\vb_{1:m})$ can be computed in $O(\min\{M,mn\}\log \nu)$ total run-time.
\end{restatable}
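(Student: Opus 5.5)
We prove Theorem~\ref{thm:payments} by expressing each payment as a sum of local externalities along the tree. We start from Theorem~\ref{thm:welfare} applied to the modified bid profile $(\mathbf{0},\vb_{-i})$: $h_i(\vb_{-i})$ is the welfare of the average-bid DFS under the modified bids. Write $F'(\su)\coloneq F_b(\su)-\sum_{r\in\cL(\su)\cap\cR_i} b_r$ for the modified subtree sums.

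\textbf{Step 1: decomposition.} For any DFS schedule, welfare over all bids $\vb'$ decomposes over internal nodes $\su$ and ordered pairs of children $\sfc$ before $\sd$. Each such pair contributes $N(\sd)F_{b'}(\sfc)/(n-1)$, because every pair of requests in distinct subtrees is separated exactly at their lowest common ancestor. Hence both $h_i$ and $\sum_{j\ne i}\langle \vb_j,\vx_j(\Bar A(\vb_{1:m}))\rangle$ are sums of node-local terms evaluated with bids $F'$. The original order is used in the latter, and the re-sorted order in the former. Thus $p_i^{\vcg}=\frac{1}{n-1}\sum_{\su}\Delta_i(\su)$, where $\Delta_i(\su)$ is the gain from re-sorting $\cC(\su)$ by $F'/N$ rather than $F_b/N$. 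This gain is evaluated under the modified sums $F'$.

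\textbf{Step 2: only touched nodes matter.} At a node $\su$, let $T_i(\su)\subseteq\cC(\su)$ be the children containing a request of user $i$. If $T_i(\su)=\emptyset$, the two orders coincide and $\Delta_i(\su)=0$. The touched nodes are the ancestors of leaves in $\cR_i$, so over all users there are at most $\min\{M,mn\}$ touched node–user incidences. This holds because each request's root-to-leaf path has length at most its token count and at most $n$. Under the modified sums, untouched children keep their relative order, and touched children's averages only decrease.

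\textbf{Step 3: computing $\Delta_i(\su)$ in $O(|T_i(\su)|\log\nu)$.} At each node we precompute, over the stored sorted child list, prefix sums $P_N$ of $N$ and $P_F$ of $F_b$. We sort the touched children by modified average, in $O(k\log k)$ time with $k=|T_i(\su)|$.

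We then compare two pairwise sums. The sum in the original order is $\sum_{\sfc\prec\sd} N(\sd)F'(\sfc)$. The sum in the new order is obtained by merging the touched children into the untouched list. Pairs of two untouched children are identical in both orders and cancel. For a touched child $\mathsf{t}$, we binary search its original position and its new insertion position among the untouched children. Only the untouched children strictly between these two positions change relative order with $\mathsf{t}$. Their contribution changes from $N(\mathsf{s})F'(\mathsf{t})$ to $N(\mathsf{t})F_b(\mathsf{s})$, which prefix sums give as a range query in $O(1)$. One subtlety: untouched children interleaved with other touched children must be counted against the untouched list only, so the prefix sums must exclude touched children. We handle this by subtracting the touched children's own $N$ and $F_b$ from the range sums, which costs $O(k)$ per touched child, or $O(k\log k)$ after sorting by original position. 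Touched–touched pairs are handled directly, in $O(k\log k)$ via a sort and a running sum.

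Summing over touched nodes gives $\sum O(k\log\nu)$, which totals $O(\min\{M,mn\}\log\nu)$. Enumerating each user's touched nodes with their $k$ values is done by walking the root-to-leaf paths of that user's requests, which fits in the same budget.

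\textbf{Main obstacle.} The main obstacle is the bookkeeping in Step 3 when several touched children interleave. The untouched prefix sums must be queried correctly without an $O(\nu)$ scan, and ties must be broken consistently with Algorithm~\ref{alg:auction}. I would fix a global tie-breaking rule by child index so that both sorts are deterministic.
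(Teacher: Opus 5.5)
Your decomposition and per-node computation match the paper's. The payment splits as $(n-1)p_i^{\vcg}=\sum_{\su}E_i(\su)$, and unaffected--unaffected pairs contribute nothing. Affected--unaffected pairs are handled by a binary search over the stored sorted children, with range sums from $P_F,P_N$ corrected by prefix sums over the affected children in original order. Affected--affected pairs are handled by sorting on modified average plus a running sum. This gives $O(k\log\nu)$ at a node with $k$ affected children. (Ties need no special rule, since a tied pair has zero gain $[\,\cdot\,]_+$.)

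The gap is in the $mn$ half of the bound.
\begin{itemize}
\item Your count ``each root-to-leaf path has length at most $n$'' sums over requests to $n^2$, not $mn$. The $mn$ bound holds only per user, because the union of user $i$'s paths has at most $|V|\le 2n-1$ nodes.
\item More seriously, obtaining the affected children and the aggregates $Z_{\child}[i]$ by walking each request's root-to-leaf path costs $\sum_r\mathrm{depth}(r)$. This is $O(M)$ but not $O(mn)$. For example, let one user own $n-1$ requests, each prompt extending the previous one (an agent trajectory, as in the chains workload), and let a second user own one unrelated request. The radix tree is a caterpillar with $\nu=2$, so the claimed bound is $O(n)$, yet your path walks cost $\Theta(n^2)$. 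The claim that this step ``fits in the same budget'' is false.
\end{itemize}

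What is needed, and what the paper does, is to visit each (node, user) incidence only once. Use a single bottom-up pass in which every node returns a hash map from users to their aggregate bid in its subtree, built by merging its children's maps. The merge at $\su$ enumerates exactly the (child, user) pairs with positive aggregate bid, i.e., the affected children of all users at $\su$. The whole pass therefore costs $O(n+\Gamma)$ with $\Gamma=\sum_{\su\neq\mathrm{root}}|Z_{\su}|$. This $\Gamma$ obeys two bounds:
\begin{itemize}
\item Each map has at most one entry per user, so $\Gamma\le m|V|=O(mn)$.
\item Charging each entry to one of that user's requests below $\su$ gives $\Gamma\le\sum_r\mathrm{depth}(r)\le M+n$.
\end{itemize}
The same two bounds apply to $\sum k$ in your Step 3. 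The total is then $O(n+\Gamma\log\nu)=O(\min\{M,mn\}\log\nu)$, with the additive $n$ absorbed because $n\le\min\{M,mn\}$.
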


The end-to-end run-time of implementing the welfare-optimal DFS schedule with VCG prices is therefore $O(M+ \min\{M, mn\}\log\nu)$.
\section{Autobidding Agents for Inference Auctions}\label{sec:autobidding}

The VCG mechanism makes truthful bidding optimal within any single scheduling batch. 
However, inference users typically have many requests that need to be served over time, and they may have a budget allocated for the total amount they can spend in a given time period. 
This budget constraint couples the user's bids, as bidding truthfully in every batch can exhaust the budget too early. 
Motivated by autobidding systems in online advertising,\footnote{See Section~\ref{sec:related} for more background in autobidding in ad auctions.} we develop an autobidding agent for inference auctions: an online policy that converts a user's values and remaining budget into bids, without requiring a model of future demand or competing bids. 

\textbf{Repeated Auction Model.} 
Consider a sequence of $T$ scheduling batches. 
In batch $t \in [T]$, let $\sT^{(t)}$ be the request radix tree, $\cR_1^{(t)}$ be the autobidder's requests, $\boldv_1^{(t)} = [v_r^{(t)}]_{r \in \mathcal{R}_1^{(t)}}$ be his value vector, $\vb_1^{(t)}$ be his bid vector, and $\vb_{2:m}^{(t)}$ be the other users' bids.\footnote{We use the convention that the autobidder's index is $1$, which is without loss of generality.} 
We consider the welfare-maximizing inference auction of Section~\ref{sec:dfs}. 
Given bids $\vb_{1:m}^{(t)}$, the mechanism selects schedule $\Bar{A}^{(t)}(\vb_{1:m}^{(t)})$ and charges the autobidder his VCG payment $p_1^{(t)}(\vb_{1:m}^{(t)})$. 
The autobidder's value in batch $t$ is $V_1^{(t)}(\sigma^{(t)}) \coloneq \langle \boldv_1^{(t)}, \vx_1^{(t)}(\sigma^{(t)}) \rangle$. 
The autobidder has total budget $B = T \gamma$ (where $\gamma > 0$ is the target spend per batch), and seeks to maximize cumulative utility subject to the budget constraint $\sum_{t=1}^T p_1^{(t)}(\vb^{(t)}_{1:m}) \leq T \gamma$. 
We measure the performance of the autobidder through the notion of regret. 

\begin{algorithm}[t]
\SetAlgoLined
\SetAlgoNoEnd
\textbf{Input:} Target spend $\gamma$, step size $\eta>0$\\
Initialize $B^{(1)} = T\gamma$, $\mu^{(1)}=0$\\
\For{$t = 1,\ldots, T$}{
    Let $\alpha^{(t)} = \frac{1}{1 + \mu^{(t)}}$ and bid $\vb_1^{(t)} = \alpha^{(t)} \boldv_1^{(t)} \mathbbm{1}\{\|\alpha^{(t)} \boldv_1^{(t)}\|_1 \leq B^{(t)}\}$\\ 
    Observe payment $p_1^{(t)}(\vb_{1:m}^{(t)})$ and update budget $B^{(t+1)} \leftarrow B^{(t)} - p_1^{(t)}(\vb_{1:m}^{(t)})$\\
    Update $\mu^{(t+1)} \leftarrow \left[\mu^{(t)}+\eta(p^{(t)}_1(\vb_{1:m}^{(t)})-\gamma)\right]_+$\\
}
\caption{Pacing Autobidder for Welfare-Optimal Inference Auctions}\label{alg:ogd}
\end{algorithm}

\begin{definition}[Autobidder Regret]\label{def:reg}
    For a realized sequence of instances $(\boldv^{(t)}_1, \sT^{(t)}, \vb_{2:m}^{(t)})_{t=1}^T \sim \cP$, the optimal cumulative utility in hindsight is
    \begin{equation*}
        \opt(T) \coloneq  \max_{\vb^{(1)}_1, \ldots \vb^{(T)}_1} \left\{ \sum_{t=1}^T V_1^{(t)}(\sigma^{(t)}) - p_1^{(t)}(\vb_{1:m}^{(t)}) \; : \; \sum_{t=1}^T p_1^{(t)}(\vb_{1:m}^{(t)}) \leq T\gamma \right\}.
    \end{equation*}
    
    Regret is the expected difference between $\opt(T)$ and the autobidder's cumulative utility; i.e., 
    \begin{equation*}
        R(T) \coloneq  \E \left[\opt(T) - \sum_{t=1}^T (V_1^{(t)}(\sigma^{(t)}) - p_1^{(t)}(\vb_{1:m})) \right].
    \end{equation*}
    %
    %
\end{definition}

\textbf{Value Pacing.} 
At every timestep, our autobidder scales the request values by a common pacing factor $\alpha^{(t)}$, bids $\vb_1^{(t)} = \alpha^{(t)} \boldv_1^{(t)}$, and updates the pacing factor $\alpha^{(t)} \rightarrow \alpha^{(t+1)}$. 
The pacing factor decreases when spending exceeds the per-batch target $\gamma$, and increases when spending falls below the target. 
Pseudocode for our autobidding agent is given in Algorithm~\ref{alg:ogd}.\looseness-1 

\begin{restatable}{theorem}{thmab}\label{thm:regret}
    Suppose that $(\boldv^{(t)}_1, \sT^{(t)}, \mathbf{b}_{2:m}^{(t)})_{t=1}^T$ are drawn i.i.d. from an unknown distribution $\mathcal{P}$. 
    Let $K := \max_{t \in [T]} |\cR_1^{(t)}|$, $\|\boldv^{(t)}_1\|_{\infty} \leq C$, and suppose that schedules and payments are assigned according to the inference auction of Section~\ref{sec:dfs}.  
    Then if $\eta = \Theta(1/\sqrt{T})$, Algorithm~\ref{alg:ogd} obtains regret $R(T) = O(\mathrm{poly}(C,K,\gamma) \sqrt{T})$ when $T$ is sufficiently large. 
\end{restatable}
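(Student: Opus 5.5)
The plan follows the standard dual-descent analysis for value pacing in truthful auctions (Balseiro--Gur style), specialized to the VCG inference auction of Section~\ref{sec:dfs}.

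\textbf{Step 1: Lagrangian structure.} Because the mechanism is VCG over $\cD(\sT)$, for any $\mu\ge 0$ the bid $\vb_1 = \frac{1}{1+\mu}\boldv_1$ maximizes the Lagrangian $V_1(\sigma)-(1+\mu)p_1$ over all bid vectors $\vb_1$. Indeed, $V_1 - (1+\mu) p_1 = (1+\mu)\bigl(\langle \boldv_1/(1+\mu), \vx_1\rangle - p_1\bigr)$, and by incentive compatibility truthfully reporting the scaled values $\boldv_1/(1+\mu)$ maximizes the bracketed term. This is where the paper's earlier results enter.

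\textbf{Step 2: Dual upper bound on $\opt(T)$.} Define
\[
\varphi(\mu) := \E_{\cP}\Bigl[\max_{\vb_1}\, V_1 - (1+\mu)p_1\Bigr] + \mu\gamma .
\]
Weak duality, together with the i.i.d.\ assumption, gives $\E[\opt(T)]\le T\varphi(\mu)$ for every $\mu\ge 0$. In particular it holds at any fixed $\bar\mu$.

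\textbf{Step 3: Online dual descent.} The per-step dual function is $g_t(\mu) = V_1^{(t)} - (1+\mu)p_1^{(t)} + \mu\gamma$, evaluated at the Lagrangian-optimal bid. It is convex in $\mu$, being a maximum of affine functions, with subgradient $\gamma - p_1^{(t)}$. The update in Algorithm~\ref{alg:ogd} is therefore projected online gradient descent on these functions.

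We need bounded payments and gradients. VCG gives $0\le p_1^{(t)}\le V_1^{(t)}$ under scaled-truthful bidding; by individual rationality this is at most $\alpha KC\le KC$, since $x_r\le 1$. So gradients are bounded by $KC+\gamma$, and the standard OGD bound yields
\[
\sum_t (\mu^{(t)}-\mu)(\gamma - p_1^{(t)}) \le \frac{\mu^2}{2\eta} + \frac{\eta T (KC+\gamma)^2}{2}
\]
for every $\mu\ge 0$.

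\textbf{Step 4: Combine and handle budget depletion.} Let $\tau$ be the first period in which the indicator in the bid rule fails, i.e.\ the budget could be exceeded. Before $\tau$, the algorithm plays the Lagrangian-optimal bid for $\mu^{(t)}$. Summing $V_1^{(t)}-p_1^{(t)} = g_t(\mu^{(t)}) + \mu^{(t)}(p_1^{(t)}-\gamma)$ and taking conditional expectations, using that $\mu^{(t)}$ is independent of round $t$'s instance, gives
\[
\E\Bigl[\sum_{t\le\tau} (V_1-p_1)\Bigr] \ge \E[\tau]\,\varphi(\bar\mu) - \text{OGD regret}.
\]
Plugging in $\mu=0$ in the OGD bound handles the ordinary term. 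To bound the lost rounds $T-\tau$, we plug in $\mu = 1/\gamma$-scale comparators: if $\tau<T$, cumulative spend exceeds $T\gamma - KC$, so $\sum_{t\le\tau}(p_1^{(t)}-\gamma)\ge (T-\tau)\gamma - KC$. The OGD inequality at $\mu = KC/\gamma$ then converts this into
\[
(T-\tau)\cdot KC \lesssim \mathrm{poly}(C,K,\gamma)\Bigl(\tfrac{1}{\eta}+\eta T\Bigr).
\]
Since per-round utility is at most $KC$, the lost utility is bounded by the same quantity. Choosing $\eta=\Theta(1/\sqrt{T})$ gives $O(\mathrm{poly}(C,K,\gamma)\sqrt{T})$; the condition that $T$ is sufficiently large ensures $\mu^{(t)}$ stays bounded and the budget buffer $KC\le T\gamma$.

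\textbf{Main obstacle.} The hardest step is Step 4's stopping-time argument, which must interleave the OGD guarantee with the random stopping time $\tau$ while keeping constants polynomial. Our first attempt would apply optional stopping to the martingale differences $g_t(\mu^{(t)}) - \E[g_t(\mu^{(t)})\mid\mathcal{F}_{t-1}]$, and bound $\mu^{(t)}\le \mu_{\max} = O((KC+\gamma)/\gamma)$ using the fact that $\mu$ increases only when $p_1>\gamma$.
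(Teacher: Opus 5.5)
Your argument is correct in substance, but it takes a different route from the paper. The paper does not redo the dual-descent analysis. It verifies the hypotheses of Theorem~1 of \citet{balseiro2020dual} and invokes that theorem, and fitting the framework takes several steps:
\begin{enumerate}
\item it subtracts the free baseline priority $\vx^{(t)}(\mathbf{0})$ so that a zero-reward, zero-cost action exists;
\item it collapses the bid space into the finite menu of (incremental priority, payment) pairs induced by DFS schedules, and convexifies that menu by randomization to get a convex action set with linear reward and consumption;
\item it checks that VCG at bid $\boldv_1^{(t)}/(1+\mu^{(t)})$ selects the Lagrangian-maximizing menu item (your Step~1);
\item it argues that Algorithm~\ref{alg:ogd} coincides with theirs up to the stopping time, and that the randomized hindsight benchmark dominates the deterministic one.
\end{enumerate}
You instead re-derive the Balseiro--Lu--Mirrokni argument directly in bid space. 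This buys something. Weak duality needs no convexity, and IC alone certifies that the scaled-truthful bid maximizes the Lagrangian over all bids. Nonnegative utility together with the free zero bid makes the baseline subtraction and the convexification unnecessary. Your route also shows explicitly where IR enters: it gives $p_1^{(t)}\le\alpha^{(t)}KC$, which in turn gives $\mu^{(t)}\le KC/\gamma+\eta KC$. The paper's route, in exchange, inherits the constants and the stopping-time bookkeeping that you have to supply yourself.

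Two details of your Step~4 need tightening.

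First, define $\tau$ through the remaining budget, as the paper does: take the first round $\tau$ with $B^{(\tau+1)}\le KC$, or $\tau=T$ if there is none. Do not define it as the first failure of the indicator. The indicator depends on round $t$'s values $\boldv_1^{(t)}$, so with your definition the set of rounds in which the Lagrangian-optimal bid is played is not predictable. The step $\E[g_t(\mu^{(t)})\mid\mathcal{F}_{t-1}]=\varphi(\mu^{(t)})$ then cannot be applied to the stopped sum. With the budget-based $\tau$, the event $\{t\le\tau\}=\{B^{(t)}>KC\}$ is predictable. The indicator also passes automatically for $t\le\tau$, since $\|\alpha^{(t)}\boldv_1^{(t)}\|_1\le KC$.

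Second, using comparator $0$ for the main term and $KC/\gamma$ separately for $T-\tau$ does not close on its own. The OGD inequality at $KC/\gamma$ bounds $(T-\tau)KC$ only up to the term $-\sum_{t\le\tau}\mu^{(t)}(\gamma-p_1^{(t)})$, which you would still have to lower-bound. There are two ways to finish:
\begin{enumerate}
\item Apply the comparator $KC/\gamma$ (on $\{\tau<T\}$, and $0$ otherwise) to the single combined bound $R(T)\le\E\bigl[\sum_{t\le\tau}\mu^{(t)}(\gamma-p_1^{(t)})+KC(T-\tau)\bigr]$. The $KC(T-\tau)$ terms then cancel exactly.
\item Use your multiplier bound. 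From $\mu^{(\tau+1)}\ge\eta\sum_{t\le\tau}(p_1^{(t)}-\gamma)\ge\eta\bigl((T-\tau)\gamma-KC\bigr)$ you get $T-\tau\le\bigl(KC+\mu_{\max}/\eta\bigr)/\gamma$, which is $O(\sqrt{T})$ directly.
\end{enumerate}
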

The proof proceeds via a reduction to~\citet{balseiro2020dual}, who study online allocation problems with generic concave reward functions and resource constraints. 
The main technical difficulty is showing that their framework actually applies in our setting. 
The distributional assumption models a stationary serving environment in which request values, prefix structures, and competing bids can vary across batches but arise from a stable workload. 

\textbf{Welfare Under Heterogeneous Pacing.}
When multiple autobidders pace independently, their multipliers can differ, and so their bids will generally not be proportional to a common pacing factor. 
This can lead to a loss in per-batch social welfare compared to the welfare under truthful reporting. 
We now quantify this loss as a function of the request radix tree's structure. 

For the remainder of this section, we suppress the batch index and write each bid as $b_r = \alpha_r v_r$. 
For a subtree $\sfc$, let $F_v(\sfc) \coloneq \sum_{r \in \cL(\sfc)} v_r$ and define its \emph{value-weighted pacing factor} by $\alpha(\sfc) \coloneq F_b(\sfc) / F_v(\sfc)$.\looseness-1 

\begin{definition}[Node Pacing Condition Number]\label{def:subtree_pacing}
    The condition number of node $\su$ is $\kappa_{\su} \coloneq \max\{\alpha(\sfc) : \sfc \in \cC(\su), F_v(\sfc) > 0\} / \min\{\alpha(\sfc) : \sfc \in \cC(\su), F_v(\sfc) > 0\}$, with $\kappa_{\su} \coloneq  1$ if at most one child of $\su$ has $F_v(\sfc) > 0$. 
\end{definition}

Condition number measures how unevenly sibling subtrees are paced. 
When $\kappa_{\su} = 1$, pacing preserves the welfare-maximizing order, whereas larger values of $\kappa_{\su}$ result in greater local distortion. 
The following theorem formalizes this relationship by lower-bounding the realized welfare at a single timestep in terms of the node-level condition numbers. 

\begin{restatable}{theorem}{thmcondition}\label{thm:condition} 
    Suppose that each request $r$ has corresponding bid $b_r = \alpha_r v_r$ for pacing factors $\alpha_1, \ldots, \alpha_n \in (0, 1]$, and let $\Omega_{\su}$ denote the marginal contribution of node $\su$ to the optimal DFS welfare. 
    Then the social welfare for the batch is bounded as
    \begin{equation*}
        \frac{W(\Bar{A}(\vb_{1:m}); \boldv_{1:m})}{\max_{\sigma \in \cD(T)} W(\sigma; \boldv_{1:m})} \geq \frac{\sum_{\su} \Omega_{\su} / \kappa_{\su}}{\sum_{\su} \Omega_{\su}} \geq \frac{1}{\max_{\su} \kappa_{\su}}.
    \end{equation*}
\end{restatable}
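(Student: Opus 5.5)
The approach is to decompose welfare node by node along the radix tree, and then bound the loss at each node separately.

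\textbf{Step 1: decompose welfare over nodes.} For any DFS schedule $\sigma$ and any value vector, each ordered pair of requests $(r,s)$ with $\sigma(r)<\sigma(s)$ contributes $v_r/(n-1)$ to welfare. Every such pair has a unique lowest common ancestor $\su$, and the two requests lie in distinct children $\sfc,\sd\in\cC(\su)$. So
\begin{equation*}
W(\sigma;\boldv_{1:m}) = \frac{1}{n-1}\sum_{\su}\;\sum_{\sfc \prec_\sigma \sd,\ \sfc,\sd\in\cC(\su)} N(\sd)\,F_v(\sfc),
\end{equation*}
where $\prec_\sigma$ is the order in which $\sigma$ visits the children of $\su$. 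Write $W_{\su}(\pi_{\su})$ for the inner term, with $\pi_{\su}$ the child order at $\su$. The terms at different nodes depend on disjoint choices, which is why Theorem~\ref{thm:welfare} reduces to the local exchange argument. Define $\Omega_{\su}:=\max_{\pi}W_{\su}(\pi)$, attained by sorting children by $F_v/N$. Then $\max_{\sigma\in\cD(\sT)}W=\sum_{\su}\Omega_{\su}$. The theorem's two inequalities follow once we show $W_{\su}(\hat\pi_{\su})\ge \Omega_{\su}/\kappa_{\su}$ for every node, where $\hat\pi_{\su}$ is the order chosen by $\Bar A(\vb)$, i.e.\ sorted by $F_b/N=\alpha(\sfc)F_v(\sfc)/N(\sfc)$. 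The second inequality is immediate, since the weighted average is at least its smallest weight $1/\max_{\su}\kappa_{\su}$.

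\textbf{Step 2: the local bound, which I expect to be the main obstacle.} Fix $\su$. Let $a_{\sfc}=\alpha(\sfc)\in[\alpha_{\min},\alpha_{\max}]$ for children with $F_v(\sfc)>0$, so that $\kappa_{\su}=\alpha_{\max}/\alpha_{\min}$. Children with $F_v=0$ contribute nothing as the earlier element of a pair, and they can only raise $N$ of later ones, so they need care but should not hurt. The plan is to compare against the bid-welfare. Let $W^b_{\su}(\pi)=\sum_{\sfc\prec\sd}N(\sd)F_b(\sfc)$. Since $F_b(\sfc)=a_{\sfc}F_v(\sfc)$,
\begin{equation*}
\alpha_{\min}W_{\su}(\pi)\le W^b_{\su}(\pi)\le \alpha_{\max}W_{\su}(\pi)
\end{equation*}
for every order $\pi$. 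The order $\hat\pi$ maximizes $W^b_{\su}$, again by the exchange argument of Theorem~\ref{thm:welfare}. Let $\pi^\star$ be the value-optimal order. Then
\begin{equation*}
\alpha_{\max}W_{\su}(\hat\pi)\ge W^b_{\su}(\hat\pi)\ge W^b_{\su}(\pi^\star)\ge \alpha_{\min}\Omega_{\su},
\end{equation*}
which gives $W_{\su}(\hat\pi)\ge \Omega_{\su}/\kappa_{\su}$. The first inequality uses that only children with $F_v>0$ appear as earlier elements with nonzero weight. Pairs whose earlier child has $F_v(\sfc)=0$ also have $F_b(\sfc)=0$, so the sandwich holds termwise. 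If at most one child has positive value, then $\Omega_{\su}$ is achieved whenever that child is placed before all the others. Since it has the only positive bid average, $\hat\pi$ places it before every zero-value child. Ties among children with zero bid average do not matter, so the case $\kappa_{\su}=1$ holds.

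\textbf{Step 3: assemble.} Summing the local bounds over $\su$ gives $W(\Bar A(\vb);\boldv)=\sum_{\su}W_{\su}(\hat\pi_{\su})\ge\sum_{\su}\Omega_{\su}/\kappa_{\su}$. Dividing by $\sum_{\su}\Omega_{\su}$ yields the claimed ratio; if the denominator is zero the statement is vacuous. The two points to verify carefully are the decomposition in Step~1 and the fact that the local ordering of $\Bar A$ is exactly the $W^b_{\su}$-maximizer, including tie-breaking. The $1/(n-1)$ normalization cancels throughout.
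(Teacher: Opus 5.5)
Your proof is correct, but its key local step differs from the paper's. Both arguments use the same per-node decomposition of DFS welfare, which gives $\max_{\sigma\in\cD(\sT)}W=\sum_{\su}\Omega_{\su}$, and then sum a local bound $W_{\su}(\hat\pi_{\su})\ge\Omega_{\su}/\kappa_{\su}$.

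The paper proves that local bound pair by pair. For siblings with $\Bar{v}(\sfc)\ge\Bar{v}(\sd)$, either $\Bar{A}$ keeps the value-optimal order and loses nothing, or it reverses them. In the reversed case, $\Bar{b}(\sd)\ge\Bar{b}(\sfc)$ forces $\Bar{v}(\sd)/\Bar{v}(\sfc)\ge\alpha(\sfc)/\alpha(\sd)\ge 1/\kappa_{\su}$.

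You instead use a proxy-objective argument over the whole node. You combine the termwise sandwich $\alpha_{\min}W_{\su}\le W^b_{\su}\le\alpha_{\max}W_{\su}$ with the fact, from Theorem~\ref{thm:welfare}, that any sorted order $\hat\pi_{\su}$ maximizes $W^b_{\su}$.

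What your route buys:
\begin{itemize}
\item It is shorter and treats Theorem~\ref{thm:welfare} as a black box.
\item It handles zero-value children cleanly. The paper divides by $\Bar{v}(\sfc)$ and uses $\alpha(\sd)$ without remarking that a welfare-relevant reversal forces $F_v(\sd)>0$.
\item It composes directly with an approximate maximizer of bid welfare.
\end{itemize}

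What the paper's pairwise route buys is finer information:
\begin{itemize}
\item Unreversed sibling pairs lose nothing.
\item Each reversed pair loses at most the pair-specific factor $\alpha(\sfc)/\alpha(\sd)$, which can be much better than $1/\kappa_{\su}$.
\end{itemize}
This is what supports the paper's interpretation that welfare drops only when pacing reverses sibling order, as at node~3 in Figure~\ref{fig:pacing}.
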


Theorem~\ref{thm:condition} isolates the economically relevant source of inefficiency from value pacing in inference auctions. 
Pacing dispersion only matters when it changes the average-bid ordering of sibling subtrees, and the loss in welfare is weighted by $1/\kappa_{\su}$. 
Heterogeneous request-level pacing may therefore be harmless when it ``averages out'' within subtrees; what matters more is any systematic pacing variation across sibling subtrees that directly compete for scheduling priority. 
See Figure~\ref{fig:pacing} for an example.
\section{Experiments}\label{sec:expts}
\providecommand{\dfsbid}{\textsc{dfs-bid}}
\providecommand{\dfsuniform}{\textsc{dfs-uniform}}
\providecommand{\bidsort}{\textsc{bid-sort}}
\providecommand{\nosort}{\textsc{no-sort}}
\providecommand{\fcfs}{\textsc{fcfs}}
\providecommand{\mixshare}[1]{\textsc{mixshare}-#1}
\providecommand{\mixsharename}{\textsc{mixshare}}
\providecommand{\agent}{\textsc{agent}}
\providecommand{\tot}{\textsc{tot}}
\providecommand{\totlong}{\textsc{tot-long}}

\begin{figure}[t]
    \centering
    \includegraphics[width=\linewidth]{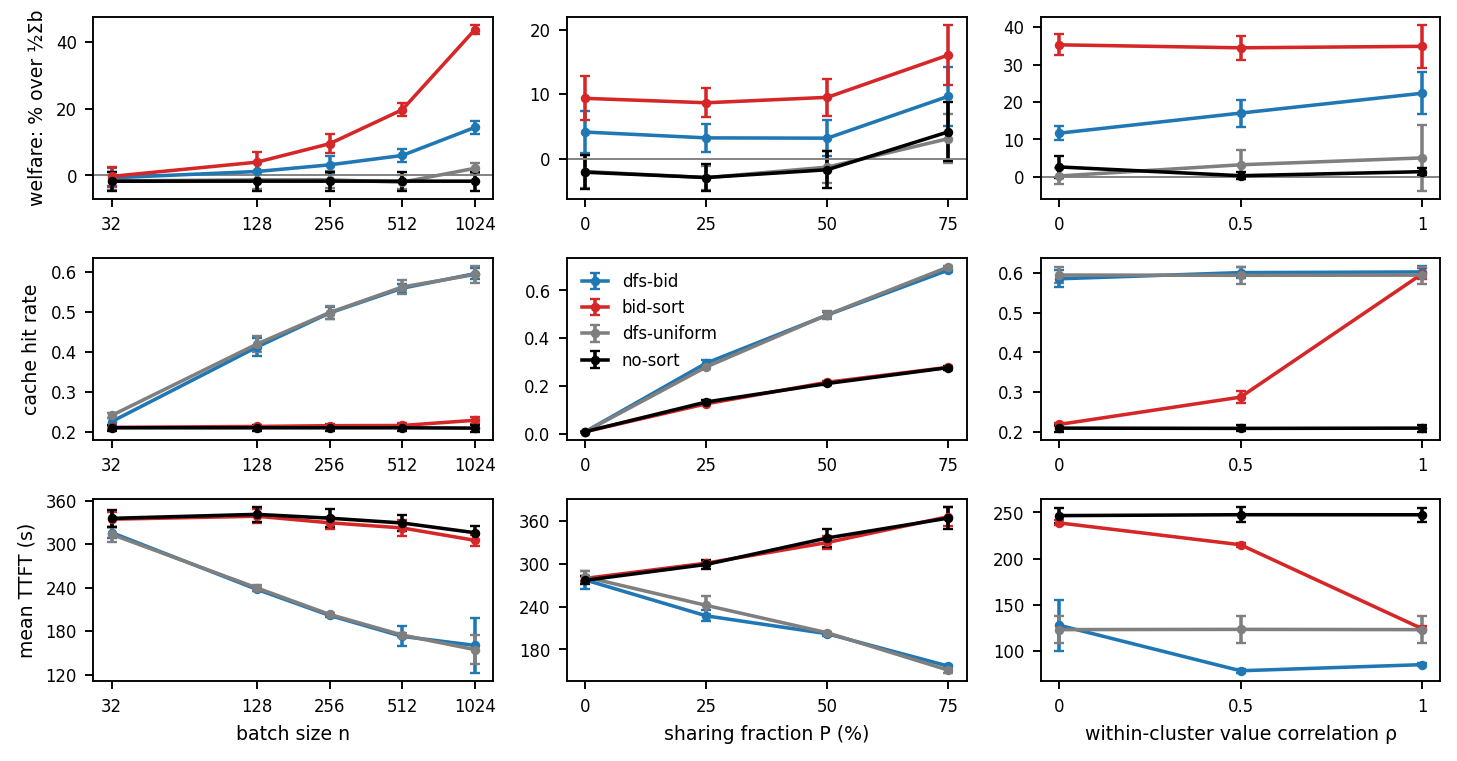}
    \caption{Stars topology. Welfare, cache hit rate, and mean TTFT, varying: batch size $n$ (left), sharing fraction $P$ with $n{=}256$ (middle), and
    within-cluster bid correlation $\rho$ (right).}
    \label{fig:grid}
\end{figure}

\textbf{Experiment Setup.} We evaluate our algorithm as a layer on
top of the SGLang~\citep{zheng2024sglang} inference system across
several semi-synthetic workloads, on a single 80GB H100 serving a 32B
LLM~\citep{guo2025deepseek}. We assume the server is saturated: a
waiting queue holds $Q=1024$ queries, which we split into batches of
$n$ before ordering with our algorithm. When $n \neq Q$ we do not flush the KV cache in between batches. We
configure SGLang to process queries first-come-first-served to preserve their order.\looseness-1

\textbf{Algorithms.} \dfsbid\ is the average-bid DFS of
Algorithm~\ref{alg:auction} with VCG payments as in
Section~\ref{sec:VCG}. \dfsuniform\ runs the
same DFS with all bids equal (meant to replicate SGLang's longest-prefix-match policy); \bidsort\ sorts queries by bid and
ignores the prefix tree (it is the unconstrained welfare maximizer in
Proposition~\ref{prop:dfs-welfare-loss}); \nosort\ serves queries in
arrival order. Values are synthetic, drawn i.i.d.\ (unless otherwise noted) from
$1+\mathrm{Pareto}(1.5)$, and bids are truthful. Every number reported
is a mean of three trials with 1-standard-error bars.

\textbf{Workloads.} We construct semi-synthetic workloads from real
LLM evaluation data, covering three basic radix tree topologies: stars,
chains, and 3-ary trees. For stars, \mixsharename\ mixes
SQuAD~\citep{rajpurkar2016squad} questions sharing an article as a
prefix with GSM8K~\citep{cobbe2021gsm8k} questions sharing few-shot examples; in \mixshare{P}, a $P$\% of the queries share prefixes and the rest are singletons. For chains we replay SWE-agent~\citep{yang2024sweagent} trajectories. 
For trees, we run tree-of-thought~\citep{yao2023tree} in a short variant and a long
variant over articles of 3--5k tokens. Prompts are $\approx$1k tokens
on stars, 3k on chains, and 160 or 4k on the two tree variants, with
$60$--$97\%$ of tokens shared (Appendix~\ref{app:workloads}). The
model's weights leave a 23k-token KV cache;
the short trees
fit in the cache entirely and serve as a control, while the other workloads
force cache eviction.

\begin{figure}[t]
    \centering
    \includegraphics[width=\linewidth]{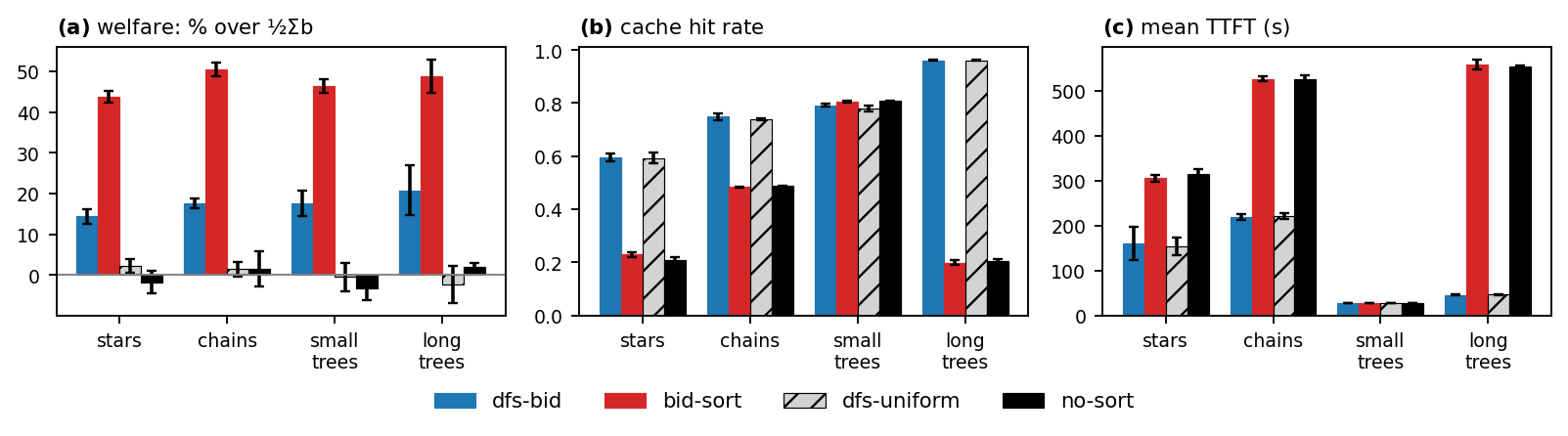}
    \caption{Tree topologies at $n=Q$. When the workload exceeds the
    cache (all topologies except small trees), \bidsort\ pays for its welfare lead (left) in cache misses (middle)
    and latency (right).}
    \label{fig:shapes}
    \vspace{-10pt}
\end{figure}

\begin{figure}[t]
    \centering
    \includegraphics[width=\linewidth]{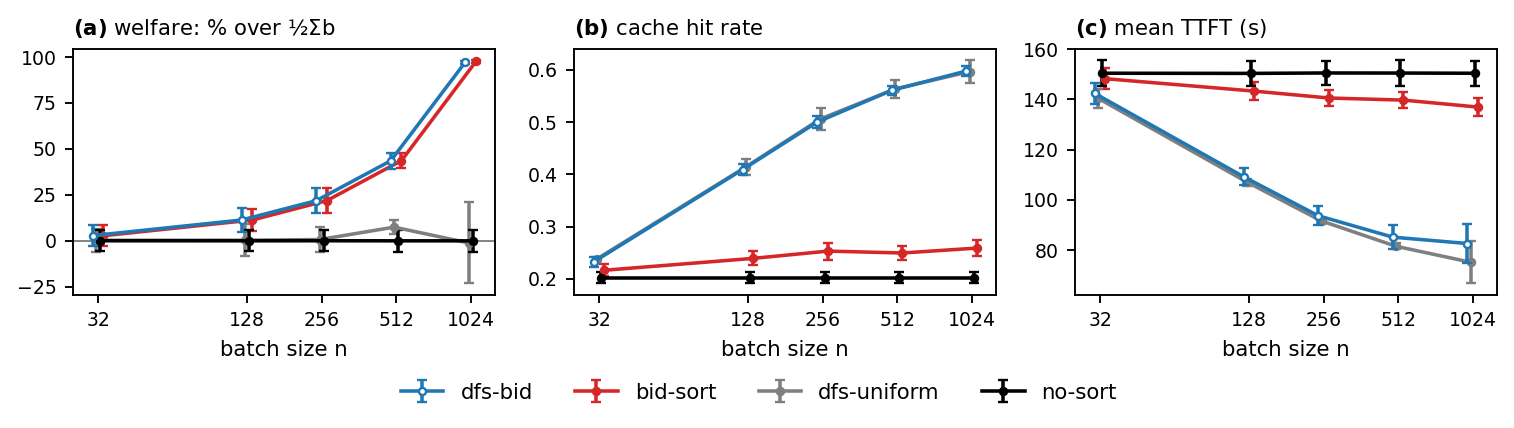}
    \caption{One cluster has non-zero bids, everyone else bids zero (\mixshare{50}).
    \dfsbid\ matches \bidsort's welfare (left; curves offset slightly for readability) while
    keeping \dfsuniform's hit rate (middle) and latency (right).}
    \label{fig:juice}
\end{figure}

\textbf{Results.} 
We measure welfare on the realized order,
i.e., with queries ranked by TTFT, and plot it as the percentage above
$\tfrac12\sum_r b_r$ (this quantity is the expected welfare of a uniformly random DFS order). We
also report the cache hit rate as reported by SGLang and the mean TTFT over the queue.\looseness-1

\begin{wrapfigure}[11]{r}{0.45\textwidth}
    \centering
    \vspace{-12pt}
    \includegraphics[height=3.9cm]{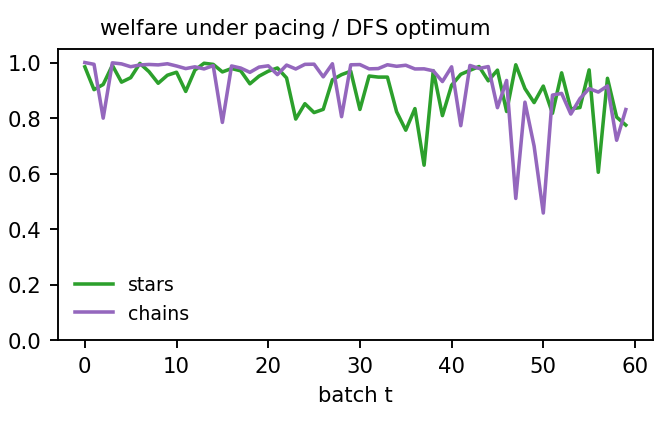}
    \vspace*{-4pt}
    \caption{Welfare under autobidding when $T=60$. Details are in Appendix~\ref{app:autobid}.}
    \label{fig:autobid_main}
\end{wrapfigure}

In every experiment, \dfsbid\ and \dfsuniform\
have the same cache hit rate and the same TTFT, but \dfsbid\ has substantially higher
welfare. \bidsort\ has higher welfare, but
it interleaves queries that share a prefix, significantly reducing cache hit rate. 
At $n=Q$, \dfsbid's  welfare is within $19$--$22\%$ of \bidsort's welfare, well under the $50\%$ that
Proposition~\ref{prop:dfs-welfare-loss} allows.

The first column of Figure~\ref{fig:grid} varies the batch size on
\mixshare{50}. With more queries in a batch there are more shared
prefixes to group, so both DFS orders gain cache hits and get faster
with $n$ while \bidsort\ and \nosort\ do not. In terms of welfare,
\dfsbid's lead over \dfsuniform\ grows from $1\%$ at
$n=32$, with few shared prefixes per batch, to $12\%$ at $n=Q$. The second column varies the sharing fraction at
$n=256$ with the same result: more sharing helps the DFS orders and
hurts the flat ones, and \dfsbid\ stays $4$--$7\%$ above \dfsuniform.
The third column varies $\rho$, how much values are correlated within clusters.
\bidsort~is substantially slower than \dfsbid~until $\rho=1$,
when cluster members are naturally grouped by having the same bid;
even then, \dfsbid\ achieves competitive welfare with better TTFT.

Figure~\ref{fig:shapes} repeats the comparison at $n=Q$ across tree
topologies. \bidsort's TTFT is $2.4\times$ \dfsbid's on chains and
$12\times$ on long trees. The short trees are the exception since they fit
in the cache, so nothing is ever evicted and every order gets the same
hit rate and the same TTFT regardless of how it schedules. This is the one case where the
DFS restriction buys nothing and costs $20\%$ of welfare.\looseness-1

Figure~\ref{fig:juice} shows the case most favorable to our mechanism
(and perhaps the most realistic): one cluster of queries with a shared
prefix has non-zero bids, and every other request bids zero. Serving that cluster
first is both the unconstrained optimum and a DFS order, so \dfsbid\
matches \bidsort's welfare at every batch size while keeping
\dfsuniform's hit rate, and \bidsort\ pays about $1.5\times$ the TTFT.\looseness-1

Finally, in Figure~\ref{fig:autobid_main} we simulate the pacing autobidders of Algorithm~\ref{alg:ogd}
on the same trees. 
Autobidding in repeated auctions causes welfare to decrease by about $10\%$ on average compared to the single-shot setting, and welfare is always above what Theorem~\ref{thm:condition} predicts.\looseness-1
\section{Related Work}\label{sec:related}

\textbf{Prefix-Aware LLM Serving.} 
Modern LLM serving frameworks improve request throughput via continuous batching and efficient management of the KV cache. 
vLLM~\citep{kwon2023efficient} uses ideas from virtual memory and paging to reduce KV cache fragmentation. 
SGLang~\citep{zheng2024sglang} stores requests in a radix tree and uses longest prefix matching to increase cache reuse across requests. 
Preble~\citep{srivatsa2025preble} jointly optimizes prefix reuse and load balancing across multiple GPUs, while BatchLLM~\citep{zheng2026batchllm} groups requests with common prefixes offline and coordinates their execution. 
\citet{dexter2026llm} study optimal request scheduling under a certain arrival pattern when cached requests are stored in a radix tree, and they propose a new family of scheduling policies which interpolate between first-come-first-served and longest prefix matching. 
Llumnix~\cite{sun2024llumnix} and QoServe~\cite{goel2026qoserve} support differentiated latency objectives and request prioritization. 
Another related line of work studies fairness in LLM serving~\citep{sheng2024fairness, cao2025locality}. 
Our inference auction is the first approach that prioritizes economic efficiency.\looseness-1

\textbf{Priority Auctions and Queues.}
The idea of allowing delay-sensitive users to purchase queue priority has a long history~\citep[e.g.,][]{kleinrock1967optimum, glazer1986stable, hassin1995decentralized, mitra2001mechanism, afeche2004pricing, kittsteiner2005priority, heydenreich2008optimal}. 
LLM inference differs from these classical models because inference requests are not computationally independent from one another. 
To the best of our knowledge, our work is the first to design incentive-compatible priority mechanisms for LLM inference serving. 

\textbf{Autobidding Agents.}
Autobidding has been studied extensively in online ad auctions; see~\citet{aggarwal2024auto} for a recent survey. 
\citet{balseiro2019learning} provide an adaptive pacing algorithm which scales bids according to realized expenditure. 
\citet{balseiro2020dual} generalize their algorithm to online mirror descent and weaken several theoretical assumptions. 
Subsequent work studies social welfare and learning dynamics when autobidders interact in repeated ad auctions~\citep[e.g.][]{balseiro2021robust, deng2021towards, gaitonde2022budget, lucier2023autobidders, paes2024complex, anagnostides2026chaos}. 
While our pacing autobidding algorithm is similar to some that are typically used in ad auctions, our welfare guarantees leverage the specific structure of the radix tree and are unique to our LLM inference setting. 

\textbf{LLMs and Incentives.}
Our results are broadly related to the growing body of work at the intersection of game theory, mechanism design, and generative AI. 
%
%
\citet{chen2026error},~\citet{guo2026pricing}, and~\citet{liu2026iemas} study pricing and incentives for LLM routing, while~\citet{harris2026context} and~\citet{anagnostides2026compensation} design incentive-aware credit assignment schemes for AI-generated content. 
\citet{velasco2026test} study social inefficiency resulting from model providers' incentives to increase test-time compute in order to increase revenue; they propose having model providers bid in a reverse second-price auction to resolve this inefficiency. 
There is also a line of work which designs auctions for ad placements in LLM-generated text~\citep{duetting2024mechanism, dubey2024auctions, hajiaghayi2024ad}.
Finally,~\citet{ji2026incentivizing} also combine an inference system with auction design in the context of distributed computing; specifically, procuring edge devices for speculative decoding. 
\looseness-1

\section{Conclusion}

We design the first {\em inference auction}, a mechanism for allocating LLM serving capacity during times of high request volume. 
Our inference auction enjoys the following key properties: (1) forcing the winning schedule to be a DFS schedule ensures cache optimality, (2) welfare maximization corresponds to a simple DFS comparator that orders radix subtrees by average bid, and (3) VCG payments, which incentivize truthful bidding, can be implemented with negligible computational overhead.
%
%
%
We also design a pacing autobidder that learns how to bid in repeated inference auctions under a budget constraint. 
Our autobidder dynamically adjusts bids based on realized spending, and achieves sublinear regret relative to the optimal-in-hindsight bidding policy. 
When users in a batch run pacing autobidders independently, we 
bound the resulting welfare loss 
by the differences in the effective pacing factors of sibling subtrees. 
In experiments, our inference auction improves social welfare while retaining the caching and latency advantages of SGLang.
Our work demonstrates how economic objectives can be incorporated directly into LLM serving infrastructure without sacrificing performance.\looseness-1

There are several exciting directions for future research. 
Serving large-scale systems may require joint orchestration of auctions, request routing, and cache management across multiple inference workers. Support for richer latency preferences is also an interesting direction. 
%
%
Finally, collusion-proof inference auction design is important to prevent strategic coordination across users.\looseness-1 

\section*{Acknowledgments} 
We would like to thank Annie Ulichney for helpful comments and suggestions. 
This work was funded by the European Union (ERC-2022-SYG-OCEAN-101071601), the NSF Institute for Foundations of Machine Learning under grant CCF-2505865, the National Science Foundation under grant CCF-2145898, the Office of Naval Research under grant N00014-24-1-2159, a Google Research Scholar Award, an Alfred P. Sloan fellowship, and a Schmidt Science AI2050 fellowship. 
Views and opinions expressed are however those of the author(s) only and do not necessarily reflect those of the European Union or the European Research Council
Executive Agency. Neither the European Union nor the granting authority can be held responsible for them.\looseness-1

\bibliographystyle{plainnat}
\bibliography{refs}

@misc{openai_priority_processing,
  author  = {{OpenAI}},
  title   = {Priority Processing},
  year    = {2026},
  url     = {https://developers.openai.com/api/docs/guides/priority-processing},
  urldate = {2026-07-02},
}

@misc{anthropic_service_tiers,
  author  = {{Anthropic}},
  title   = {Service Tiers},
  year    = {2026},
  url     = {https://platform.claude.com/docs/en/api/service-tiers},
  urldate = {2026-07-02},
}

@misc{aws_bedrock_service_tiers,
  author  = {{Amazon Web Services}},
  title   = {Optimize Costs with {A}mazon {B}edrock Provisioned Throughput and Service Tiers},
  year    = {2026},
  url     = {https://docs.aws.amazon.com/bedrock/latest/userguide/service-tiers-inference.html},
  urldate = {2026-07-02},
}

@misc{google2026priority,
  author = {{Google}},
  title  = {{Gemini API}: Priority Inference},
  year   = {2026},
  url    = {https://ai.google.dev/gemini-api/docs/priority-inference},
}

@misc{xai2026priority,
  author = {{xAI}},
  title  = {Priority Processing},
  year   = {2026},
  url    = {https://docs.x.ai/developers/advanced-api-usage/priority-processing},
}

@inproceedings{kwon2023efficient,
  title={Efficient memory management for large language model serving with {P}aged{A}ttention},
  author={Kwon, Woosuk and Li, Zhuohan and Zhuang, Siyuan and Sheng, Ying and Zheng, Lianmin and Yu, Cody Hao and Gonzalez, Joseph and Zhang, Hao and Stoica, Ion},
  booktitle={Proceedings of the 29th Symposium on Operating Systems Principles},
  pages={611--626},
  year={2023}
}

@article{zheng2024sglang,
  title={{SGL}ang: Efficient execution of structured language model programs},
  author={Zheng, Lianmin and Yin, Liangsheng and Xie, Zhiqiang and Sun, Chuyue and Huang, Jeff and Yu, Cody H and Cao, Shiyi and Kozyrakis, Christos and Stoica, Ion and Gonzalez, Joseph E and others},
  journal={Advances in Neural Information Processing Systems},
  volume={37},
  pages={62557--62583},
  year={2024}
}

@article{dexter2026llm,
  title={{LLM} query scheduling with prefix reuse and latency constraints},
  author={Dexter, Gregory and Tang, Shao and Fatahi, Ata and Song, Qingquan and Dharamsi, Tejas and Gupta, Aman},
  journal={Advances in Neural Information Processing Systems},
  volume={38},
  pages={94356--94379},
  year={2026}
}

@article{aggarwal2024auto,
  title={Auto-bidding and auctions in online advertising: A survey},
  author={Aggarwal, Gagan and Badanidiyuru, Ashwinkumar and Balseiro, Santiago R and Bhawalkar, Kshipra and Deng, Yuan and Feng, Zhe and Goel, Gagan and Liaw, Christopher and Lu, Haihao and Mahdian, Mohammad and others},
  journal={ACM SIGecom Exchanges},
  volume={22},
  number={1},
  pages={159--183},
  year={2024},
  publisher={ACM New York, NY, USA}
}

@article{krishna1998efficient,
  title={Efficient mechanism design},
  author={Krishna, Vijay and Perry, Motty},
  journal={Available at SSRN 64934},
  year={1998}
}

@inproceedings{prasad2026weakest,
  title={Weakest bidder types and new core-selecting combinatorial auctions},
  author={Prasad, Siddharth and Balcan, Maria Florina and Sandholm, Tuomas},
  booktitle={Proceedings of the AAAI Conference on Artificial Intelligence},
  volume={40},
  pages={17206--17214},
  year={2026}
}

@article{balcan2023bicriteria,
  title={Bicriteria multidimensional mechanism design with side information},
  author={Balcan, Maria-Florina and Prasad, Siddharth and Sandholm, Tuomas},
  journal={Advances in Neural Information Processing Systems},
  volume={36},
  pages={40832--40852},
  year={2023}
}

@inproceedings{balseiro2020dual,
  title={Dual mirror descent for online allocation problems},
  author={Balseiro, Santiago and Lu, Haihao and Mirrokni, Vahab},
  booktitle={International Conference on Machine Learning},
  pages={613--628},
  year={2020},
  organization={PMLR}
}

@article{Vickrey61:Counterspeculation,
  title={Counterspeculation, auctions, and competitive sealed tenders},
  author={Vickrey, William},
  journal={Journal of Finance},
  volume={16},
  number={1},
  pages={8--37},
  year={1961},
  publisher={JSTOR}
}

@article{Clarke71:Multipart,
  title={Multipart pricing of public goods},
  author={Clarke, Edward H},
  journal={Public Choice},
  pages={17--33},
  year={1971},
  publisher={JSTOR}
}

@article{Groves73:Incentives,
  title={Incentives in teams},
  author={Groves, Theodore},
  journal={Econometrica},
  pages={617--631},
  year={1973},
  publisher={JSTOR}
}

@inproceedings{sheng2024fairness,
  title={Fairness in serving large language models},
  author={Sheng, Ying and Cao, Shiyi and Li, Dacheng and Zhu, Banghua and Li, Zhuohan and Zhuo, Danyang and Gonzalez, Joseph E and Stoica, Ion},
  booktitle={18th USENIX Symposium on Operating Systems Design and Implementation (OSDI 24)},
  pages={965--988},
  year={2024}
}

@inproceedings{heydenreich2008optimal,
  title={Optimal mechanisms for single machine scheduling},
  author={Heydenreich, Birgit and Mishra, Debasis and M{\"u}ller, Rudolf and Uetz, Marc},
  booktitle={International Workshop on Internet and Network Economics},
  pages={414--425},
  year={2008},
  organization={Springer}
}

@inproceedings{srivatsa2025preble,
  title={Preble: Efficient distributed prompt scheduling for {LLM} serving},
  author={Srivatsa, Vikranth and He, Zijian and Abhyankar, Reyna and Li, Dongming and Zhang, Yiying},
  booktitle={International Conference on Learning Representations},
  volume={2025},
  pages={37057--37082},
  year={2025}
}

@article{zheng2026batchllm,
  title={Batch{LLM}: Optimizing large batched {LLM} inference with global prefix sharing and throughput-oriented token batching},
  author={Zheng, Zhen and Ji, Xin and Fang, Taosong and Zhou, Fanghao and Liu, Chuanjie and Peng, Gang},
  journal={Proceedings of Machine Learning and Systems},
  volume={8},
  pages={1742--1756},
  year={2026}
}

@article{cao2025locality,
  title={Locality-aware fair scheduling in {LLM} serving},
  author={Cao, Shiyi and Wang, Yichuan and Mao, Ziming and Hsu, Pin-Lun and Yin, Liangsheng and Xia, Tian and Li, Dacheng and Liu, Shu and Zhang, Yineng and Zhou, Yang and others},
  journal={arXiv preprint arXiv:2501.14312},
  year={2025}
}

@article{kleinrock1967optimum,
  title={Optimum bribing for queue position},
  author={Kleinrock, Leonard},
  journal={Operations Research},
  volume={15},
  number={2},
  pages={304--318},
  year={1967},
  publisher={INFORMS}
}

@article{hassin1995decentralized,
  title={Decentralized regulation of a queue},
  author={Hassin, Rafael},
  journal={Management Science},
  volume={41},
  number={1},
  pages={163--173},
  year={1995},
  publisher={INFORMS}
}

@article{afeche2004pricing,
  title={Pricing and priority auctions in queueing systems with a generalized delay cost structure},
  author={Afeche, Philipp and Mendelson, Haim},
  journal={Management Science},
  volume={50},
  number={7},
  pages={869--882},
  year={2004},
  publisher={INFORMS}
}

@article{kittsteiner2005priority,
  title={Priority auctions and queue disciplines that depend on processing time},
  author={Kittsteiner, Thomas and Moldovanu, Benny},
  journal={Management Science},
  volume={51},
  number={2},
  pages={236--248},
  year={2005},
  publisher={INFORMS}
}

@article{mitra2001mechanism,
  title={Mechanism design in queueing problems},
  author={Mitra, Manipushpak},
  journal={Economic Theory},
  volume={17},
  number={2},
  pages={277--305},
  year={2001},
  publisher={Springer}
}

@article{balseiro2019learning,
  title={Learning in repeated auctions with budgets: Regret minimization and equilibrium},
  author={Balseiro, Santiago R and Gur, Yonatan},
  journal={Management Science},
  volume={65},
  number={9},
  pages={3952--3968},
  year={2019},
  publisher={INFORMS}
}

@article{lucier2023autobidders,
  title={Autobidders with budget and {ROI} constraints: Efficiency, regret, and pacing dynamics},
  author={Lucier, Brendan and Pattathil, Sarath and Slivkins, Aleksandrs and Zhang, Mengxiao},
  journal={arXiv preprint arXiv:2301.13306},
  year={2023}
}

@article{balseiro2021robust,
  title={Robust auction design in the auto-bidding world},
  author={Balseiro, Santiago and Deng, Yuan and Mao, Jieming and Mirrokni, Vahab and Zuo, Song},
  journal={Advances in Neural Information Processing Systems},
  volume={34},
  pages={17777--17788},
  year={2021}
}

@inproceedings{deng2021towards,
  title={Towards efficient auctions in an auto-bidding world},
  author={Deng, Yuan and Mao, Jieming and Mirrokni, Vahab and Zuo, Song},
  booktitle={Proceedings of the Web Conference 2021},
  pages={3965--3973},
  year={2021}
}

@article{gaitonde2022budget,
  title={Budget pacing in repeated auctions: Regret and efficiency without convergence},
  author={Gaitonde, Jason and Li, Yingkai and Light, Bar and Lucier, Brendan and Slivkins, Aleksandrs},
  journal={arXiv preprint arXiv:2205.08674},
  year={2022}
}

@article{harris2026context,
  title={In-context credit assignment via the core},
  author={Harris, Keegan and Prasad, Siddharth and Trockman, Asher},
  journal={arXiv preprint arXiv:2605.06920},
  year={2026}
}

@article{chen2026error,
  title={Error-Aware Reverse Auction Mechanism for Large Language Model Routing},
  author={Chen, Haolong and Xin, Zhengyuan and Zhang, Liang and Xue, Lei and Zhu, Guangxu},
  journal={arXiv preprint arXiv:2608.12719},
  year={2026}
}

@article{liu2026iemas,
  title={IEMAS: An Incentive-Efficiency Routing Framework for Open Agentic Web Ecosystems},
  author={Liu, Hongze and Guo, Chang and Li, Yingzeng and Wang, Mengru and Lou, Jiong and Yuan, Shijing and Zhou, Hefeng and Wu, Chentao and Li, Jie},
  journal={arXiv preprint arXiv:2603.17302},
  year={2026}
}

@inproceedings{duetting2024mechanism,
  title={Mechanism design for large language models},
  author={Duetting, Paul and Mirrokni, Vahab and Paes Leme, Renato and Xu, Haifeng and Zuo, Song},
  booktitle={Proceedings of the ACM Web Conference 2024},
  pages={144--155},
  year={2024}
}

@inproceedings{dubey2024auctions,
  title={Auctions with {LLM} summaries},
  author={Dubey, Avinava and Feng, Zhe and Kidambi, Rahul and Mehta, Aranyak and Wang, Di},
  booktitle={Proceedings of the 30th ACM SIGKDD Conference on Knowledge Discovery and Data Mining},
  pages={713--722},
  year={2024}
}

@article{hajiaghayi2024ad,
  title={Ad auctions for {LLM}s via retrieval augmented generation},
  author={Hajiaghayi, MohammadTaghi and Lahaie, S{\'e}bastien and Rezaei, Keivan and Shin, Suho},
  journal={Advances in Neural Information Processing Systems},
  volume={37},
  pages={18445--18480},
  year={2024}
}

@inproceedings{paes2024complex,
  title={Complex dynamics in autobidding systems},
  author={Paes Leme, Renato and Piliouras, Georgios and Schneider, Jon and Spendlove, Kelly and Zuo, Song},
  booktitle={Proceedings of the 25th ACM Conference on Economics and Computation},
  pages={75--100},
  year={2024}
}

@article{anagnostides2026chaos,
  title={Chaos in Autobidding Auctions},
  author={Anagnostides, Ioannis and Gemp, Ian and Piliouras, Georgios and Spendlove, Kelly},
  journal={arXiv preprint arXiv:2602.09118},
  year={2026}
}

@article{anagnostides2026compensation,
  title={Compensation Design},
  author={Anagnostides, Ioannis and Bhawalkar, Kshipra and Liaw, Christopher and Mehta, Aranyak and Leme, Renato Paes and Teng, Yifeng and Velegkas, Grigoris and Zheng, Weiqiang},
  journal={arXiv preprint arXiv:2607.14438},
  year={2026}
}

@article{glazer1986stable,
  title={Stable priority purchasing in queues},
  author={Glazer, Amihai and Hassin, Refael},
  journal={Operations Research Letters},
  volume={4},
  number={6},
  pages={285--288},
  year={1986},
  publisher={Elsevier}
}

@inproceedings{rajpurkar2016squad,
  title={{SQuAD}: 100,000+ Questions for Machine Comprehension of Text},
  author={Rajpurkar, Pranav and Zhang, Jian and Lopyrev, Konstantin and Liang, Percy},
  booktitle={Proceedings of the 2016 Conference on Empirical Methods in Natural Language Processing},
  year={2016}
}

@article{cobbe2021gsm8k,
  title={Training Verifiers to Solve Math Word Problems},
  author={Cobbe, Karl and Kosaraju, Vineet and Bavarian, Mohammad and Chen, Mark and Jun, Heewoo and Kaiser, Lukasz and Plappert, Matthias and Tworek, Jerry and Hilton, Jacob and Nakano, Reiichiro and Hesse, Christopher and Schulman, John},
  journal={arXiv preprint arXiv:2110.14168},
  year={2021}
}

@inproceedings{yang2024sweagent,
  title={{SWE}-agent: Agent-Computer Interfaces Enable Automated Software Engineering},
  author={Yang, John and Jimenez, Carlos E. and Wettig, Alexander and Lieret, Kilian and Yao, Shunyu and Narasimhan, Karthik and Press, Ofir},
  booktitle={Advances in Neural Information Processing Systems},
  year={2024}
}

@inproceedings{yao2023tree,
  title={Tree of Thoughts: Deliberate Problem Solving with Large Language Models},
  author={Yao, Shunyu and Yu, Dian and Zhao, Jeffrey and Shafran, Izhak and Griffiths, Thomas L. and Cao, Yuan and Narasimhan, Karthik},
  booktitle={Advances in Neural Information Processing Systems},
  year={2023}
}

@article{guo2025deepseek,
  title={{DeepSeek-R1}: Incentivizing Reasoning Capability in {LLMs} via Reinforcement Learning},
  author={{DeepSeek-AI}},
  journal={arXiv preprint arXiv:2501.12948},
  year={2025}
}

@article{qwen2025qwen25,
  title={Qwen2.5 Technical Report},
  author={{Qwen Team}},
  journal={arXiv preprint arXiv:2412.15115},
  year={2025}
}

@article{velasco2026test,
  title={Test-Time Compute Games},
  author={Velasco, Ander Artola and Rontogiannis, Dimitrios and Tsirtsis, Stratis and Gomez-Rodriguez, Manuel},
  journal={arXiv preprint arXiv:2601.21839},
  year={2026}
}

@inproceedings{guo2026pricing,
  title={Pricing online LLM services with data-calibrated stackelberg routing game},
  author={Guo, Zhendong and Bai, Wenchao and Jin, Jiahui},
  booktitle={Proceedings of the AAAI Conference on Artificial Intelligence},
  volume={40},
  pages={17005--17013},
  year={2026}
}

@inproceedings{ji2026incentivizing,
  title={Incentivizing and Orchestrating Cloud-Edge {LLM} Speculative Decoding via Auctions},
  author={Ji, Mingtao and Jiao, Lei and Tang, Bin and Qu, Zhihao and Chen, Chen and Ye, Baoliu},
  booktitle={2026 IEEE 46th International Conference on Distributed Computing Systems (ICDCS)},
  pages={425--435},
  year={2026},
  organization={IEEE}
}

@inproceedings{sun2024llumnix,
  title={Llumnix: Dynamic scheduling for large language model serving},
  author={Sun, Biao and Huang, Ziming and Zhao, Hanyu and Xiao, Wencong and Zhang, Xinyi and Li, Yong and Lin, Wei},
  booktitle={18th USENIX Symposium on Operating Systems Design and Implementation (OSDI 24)},
  pages={173--191},
  year={2024}
}

@inproceedings{goel2026qoserve,
  title={QoServe: Breaking the silos of {LLM} inference serving},
  author={Goel, Kanishk and Mohan, Jayashree and Kwatra, Nipun and Anupindi, Ravi Shreyas and Ramjee, Ramachandran},
  booktitle={Proceedings of the 31st ACM International Conference on Architectural Support for Programming Languages and Operating Systems, Volume 2},
  pages={1492--1507},
  year={2026}
}

@inproceedings{qin2025mooncake,
  title={Mooncake: Trading more storage for less computation—a $\{$KVCache-centric$\}$ architecture for serving $\{$LLM$\}$ chatbot},
  author={Qin, Ruoyu and Li, Zheming and He, Weiran and Cui, Jialei and Ren, Feng and Zhang, Mingxing and Wu, Yongwei and Zheng, Weimin and Xu, Xinran},
  booktitle={23rd USENIX Conference on File and Storage Technologies (FAST 25)},
  pages={155--170},
  year={2025}
}

\newpage
\appendix
\section{Appendix for Section~\ref{sec:env}: Cache-Optimal Schedules and Our Bidding Environment}\label{app:env}

\begin{figure}[h]
\centering
\begin{tikzpicture}[
  tok/.style={draw, fill=white, inner sep=3pt, font=\small},
  br/.style={circle, fill, inner sep=1.5pt},
  leaf/.style={circle, draw, inner sep=1pt, font=\scriptsize},
  edge/.style={line width=1.1pt},
]
  \node[tok] (s)  at (0,0)    {Summarize};
  \node[br]  (d0) at (0,-0.8) {};
  \draw[edge] (0,0.8) -- (s) -- (d0);

  \node[tok] (art) at (-2.8,-1.7) {the article};
  \node[tok] (em)  at ( 2.8,-1.7) {this email};
  \node[br]  (d1)  at (-2.8,-2.5) {};
  \node[br]  (d2)  at ( 2.8,-2.5) {};
  \draw[edge] (d0) -- (art) -- (d1);
  \draw[edge] (d0) -- (em)  -- (d2);

  \node[tok] (cats)  at (-5.0,-3.4) {about cats};
  \node[tok] (bul)   at (-2.8,-3.4) {as bullets};
  \node[tok] (fr)    at (-0.6,-3.4) {in French};
  \node[tok] (brf)   at ( 1.7,-3.4) {briefly};
  \node[tok] (formy) at ( 3.9,-3.4) {for my};
  \draw[edge] (d1) -- (cats);
  \draw[edge] (d1) -- (bul);
  \draw[edge] (d1) -- (fr);
  \draw[edge] (d2) -- (brf);
  \draw[edge] (d2) -- (formy);

  \node[leaf] (l1) at (-5.0,-4.3) {1};
  \node[leaf] (l2) at (-2.8,-4.3) {2};
  \node[leaf] (l3) at (-0.6,-4.3) {3};
  \node[leaf] (l4) at ( 1.7,-4.3) {4};
  \draw[edge] (cats) -- (l1);
  \draw[edge] (bul)  -- (l2);
  \draw[edge] (fr)   -- (l3);
  \draw[edge] (brf)  -- (l4);

  \node[br]  (d3)   at (3.9,-4.2) {};
  \node[tok] (boss) at (3.1,-5.1) {boss};
  \node[tok] (team) at (4.7,-5.1) {team};
  \node[leaf] (l5)  at (3.1,-6.0) {5};
  \node[leaf] (l6)  at (4.7,-6.0) {6};
  \draw[edge] (formy) -- (d3);
  \draw[edge] (d3) -- (boss) -- (l5);
  \draw[edge] (d3) -- (team) -- (l6);

  \node[anchor=north west, font=\small] at (-6.0,-4.9) {%
    \begin{tabular}{@{}r@{\hspace{0.5em}}l@{}}
      1 & Summarize the article about cats \\
      2 & Summarize the article as bullets \\
      3 & Summarize the article in French \\
      4 & Summarize this email briefly \\
      5 & Summarize this email for my boss \\
      6 & Summarize this email for my team \\
    \end{tabular}};
\end{tikzpicture}
\caption{A request radix tree. Each request is a root-to-leaf path; requests
with a common prefix share the path from the root to their lowest common
ancestor, and non-branching paths are contracted into a single edge. The maximum branching factor of this tree is $3$. Requests in the same subtree are easier to store together in the KV cache, as their shared prefixes only need to be stored once.}
\label{fig:radix-tree}
\end{figure}

\propwelfare*

\begin{proof}
    Let $\sigma\in\cD(\sT)$ be any feasible schedule and let $\pi$ be the reverse of $\sigma$. 
    We have $\pi\in\cD(\sT)$ since $\pi$ is attainable by reversing all child orderings in the DFS traversal of $\sT$ corresponding to $\sigma$. 
    For any user $i$ and request $r$, we have $\vx_i(\sigma)[r] + \vx_i(\pi)[r] = 1$, so $W(\sigma;\boldv_{1:m}) + W(\pi;\boldv_{1:m}) = \sum_{i=1}^m \|\boldv_i\|_1$. 
    Therefore, at least one of $\sigma$ or $\pi$ generates welfare at least $\frac{1}{2}\sum_{i=1}^m \|\boldv_i\|_1$, and so $\max_{\sigma\in\cD(\sT)}W(\sigma;\boldv_{1:m})\ge\frac{1}{2}\sum_{i=1}^m \|\boldv_i\|_1 \geq \frac{1}{2}\max_{\sigma\in\cS_n}W(\sigma;\boldv_{1:m})$.
\end{proof}
\section{Appendix for Section~\ref{sec:dfs}: Welfare-Optimal Inference Auctions}\label{app:dfs}

\paragraph{Discussion on VCG Payments.} 
The~\citet{Clarke71:Multipart} pivot term $h_i(\vb_{-i})$ in VCG payments is often (informally) framed as the maximum welfare the bidders excluding $i$ could obtain when bidder $i$ is removed from the auction, while we define it by zeroing out that bidder's bids. In typical multi-item multi-bidder auction settings, there is no distinction. In our setting, actually evicting $i$'s requests would alter the radix tree of requests and change the ground set of feasible schedules to allow requests to go un-served. In addition to the fact that all requests should eventually be served, a payment scheme based on eviction would incur outward payments made by the system to the users. Indeed, any zero-bidder would have to be paid because the maximum welfare enjoyed by the remaining bidders decreases when that zero-bidder is evicted, since each remaining bidder would win lower fractional priority.\looseness-1

In our setting, the correct formulation of VCG involves the minimum value of the maximum welfare the bidders excluding $i$ could obtain over all possible ``types'' (bids) for bidder $i$, which is attained when $i$ is a zero bidder.
This is precisely the reasoning folded into the enhancement of VCG based on ``weakest types''~\citep[see, e.g.,][]{krishna1998efficient,balcan2023bicriteria,prasad2026weakest}. 

For completeness, we include below the standard arguments demonstrating that VCG in our setting is indeed efficient, incentive compatible, individually rational, and incurs no positive transfers.

Under VCG prices, efficiency follows from Theorem~\ref{thm:welfare}. 
To see why these payments incentivize truthful bidding, fix the bids $\vb_{-i}$ of the other users.
If user $i$ bids according to $\vb_i$, his utility can be written as 
\begin{equation*}
V_i(\Bar{A}(\vb_i,\vb_{-i})) - p_i^{\vcg}(\vb_i,\vb_{-i})
= W(\Bar{A}(\vb_i,\vb_{-i});(\boldv_i,\vb_{-i})) - h_i(\vb_{-i}).
\end{equation*}
$h_i(\vb_{-i})$ does not depend on $\vb_i$, and $W(\Bar{A}(\vb_i,\vb_{-i});(\boldv_i,\vb_{-i}))$ is maximized when $\vb_i = \boldv_i$ by Theorem~\ref{thm:welfare}. 
Therefore, it is a (weakly) dominant strategy for each user to bid truthfully, independent of the bids of the other users. 
Individual rationality holds because user $i$'s values and priorities are non-negative. 
Under truthful bidding, user $i$'s utility is $\max_{\sigma \in \cD(\sT)} W(\sigma; (\boldv_i, \vb_{-i})) - h_i(\vb_{-i})$. 
Now consider the schedule that maximizes the other users' welfare when $i$'s bids are zero. 
That schedule remains feasible when $i$ bids truthfully. 
Under it, total welfare is $h_i(\vb_{-i}) + V_i(\sigma) \geq h_i(\vb_{-i})$, since $V_i(\sigma) \geq 0$. 
Finally, no positive transfers follows because $h_i(\vb_{-i})$ is the maximum welfare available to the other users, and is always at least the welfare they receive under the selected schedule. 

\subsection{Algorithm for Computing the Welfare-Maximizing Schedule}

\begin{figure}[t]
\centering
\definecolor{dfsblue}{RGB}{30,64,230}
\definecolor{dfsgray}{RGB}{120,125,135}
\begin{tikzpicture}[
  tok/.style={draw, fill=white, inner sep=3pt, font=\small},
  br/.style={circle, fill, inner sep=1.5pt},
  leaf/.style={circle, draw, inner sep=1pt, font=\scriptsize},
  bid/.style={anchor=west, xshift=1pt, font=\small},
  avg/.style={fill=#1, text=white, rounded corners=2pt, inner sep=2pt,
              font=\scriptsize\bfseries, anchor=west, xshift=4pt},
  edge/.style={line width=1.1pt, draw=dfsgray},
  first/.style={line width=1.4pt, draw=dfsblue, -stealth},
]
  \node[tok] (s)  at (0,0)    {Summarize};
  \node[br]  (d0) at (0,-0.8) {};
  \draw[edge] (0,0.8) -- (s) -- (d0);

  \node[tok] (art) at (-3.3,-1.7) {the article};
  \node[tok] (em)  at ( 2.5,-1.7) {this email};
  \node[br, dfsblue] (d1) at (-3.3,-2.7) {};
  \node[br]          (d2) at ( 2.5,-2.7) {};
  \draw[first] (d0) -- (art);  \draw[first] (art) -- (d1);
  \draw[edge]  (d0) -- (em) -- (d2);
  \node[avg=dfsblue] at (d1) {avg \$6};
  \node[avg=dfsgray] at (d2) {avg \$5};

  \node[tok] (cats) at (-5.2,-3.6) {about cats};
  \node[tok] (fr)   at (-3.3,-3.6) {in French};
  \node[tok] (bul)  at (-1.4,-3.6) {as bullets};
  \draw[first] (d1) -- (cats);
  \draw[edge]  (d1) -- (fr);
  \draw[edge]  (d1) -- (bul);

  \node[tok] (formy) at (1.3,-3.6) {for my};
  \node[tok] (brf)   at (3.8,-3.6) {briefly};
  \draw[first] (d2) -- (formy);
  \draw[edge]  (d2) -- (brf);

  \node[br, dfsblue] (d3) at (1.3,-4.4) {};
  \draw[edge] (formy) -- (d3);
  \node[avg=dfsblue] at (d3) {avg \$6.5};
  \node[tok] (boss) at (0.5,-5.2) {boss};
  \node[tok] (team) at (2.1,-5.2) {team};
  \draw[first] (d3) -- (boss);
  \draw[edge]  (d3) -- (team);

  \foreach \r/\x/\b/\src in {1/-5.2/9/cats, 3/-3.3/6/fr, 2/-1.4/3/bul,
                             5/0.5/8/boss, 6/2.1/5/team, 4/3.8/2/brf} {
    \node[leaf] (l\r) at (\x,-6.1) {\r};
    \node[bid] at (l\r.east) {\$\b};
    \draw[edge] (\src) -- (l\r);
  }

  \draw[-stealth, line width=0.8pt, dfsgray] (-5.8,-6.8) -- (4.6,-6.8)
    node[midway, below, font=\small, text=black] {prefill order};
\end{tikzpicture}
\caption{The welfare-maximizing DFS schedule on an example radix tree. At every internal node, children
are visited in decreasing order of average subtree bid (blue marks the child visited first). The resulting schedule is $1, 3, 2, 5, 6, 4$. Note that
request~5 bids \$8 but is served after request~2, which bids \$3. This is because request~5 belongs to a subtree with a lower average bid, and a DFS schedule cannot
interleave the two subtrees.}
\label{fig:avg-bid-dfs}
\end{figure}

\thmwelfare*

Let $\Phi(\sigma; \vb_{1:m}) \coloneq (n-1)W(\sigma; \vb_{1:m})= \sum_{r=1}^n b_r\cdot |\{j : \sigma(r) < \sigma(j)\}| = \sum_{r = 1}^n\sum_{j\succ_{\sigma} r} b_r$.
Maximizing $W(\cdot;\vb_{1:m})$ over a set of feasible schedules is equivalent to maximizing its un-normalized version $\Phi(\cdot;\vb_{1:m})$. 

\begin{proof}
    Let $\sigma\in\mathcal{D}(\sT)$ be a valid DFS leaf ordering. For $\mathsf{u}$ an internal node (that is, not a leaf node) of $\sT$, let $\mathsf{c}_{1},\ldots,\mathsf{c}_{d(\su)}$ be the children of $\su$, labeled in the DFS order induced by $\sigma$. So, all leaves under child $\mathsf{c}_j$ are visited before all leaves under child $\mathsf{c}_k$ for all $1\le j < k\le d(\su)$.
    The un-normalized welfare contribution from ordered leaf pairs in $\cL(\mathsf{c}_j)\times\cL(\mathsf{c}_k)$ for $j < k$ is therefore $\sum_{i_1\in\cL(\mathsf{c}_j)}\sum_{i_2\in\cL(\mathsf{c}_k)} b_{i_1} = N(\mathsf{c}_k)F_b(\mathsf{c}_j)$. It follows that un-normalized welfare can be written as
    $$\Phi(\sigma;\vb_{1:m}) = \sum_{\su}\sum_{1\le j < k \le d(\su)} N(\mathsf{c}_k)F_b(\mathsf{c}_j).$$
    
    We show that $\Phi(\sigma;\vb_{1:m})$ is maximized by visiting the children of any internal node $\su$ in descending order of $F_b/N$.
    Suppose $\mathsf{c}_{j}, \mathsf{c}_{j+1}\in\cC(\su)$ are adjacent in the DFS traversal with $F_b(\mathsf{c}_j) / N(\mathsf{c}_j) < F_b(\mathsf{c}_{j+1}) / N(\mathsf{c}_{j+1})$. This pair contributes $N(\mathsf{c}_{j+1})F_b(\mathsf{c}_j)$ to $\Phi$. Swapping their order contributes $N(\mathsf{c}_j)F_b(\mathsf{c}_{j+1}) > N(\mathsf{c}_{j+1})F_b(\mathsf{c}_j)$ to welfare without impacting the contribution of any other pair in $\cC(\su)$ (since the relative ordering of $\mathsf{c}_j$ and $\mathsf{c}_{j+1}$ with respect to all other children is preserved).
    Therefore, each inner sum of $\sum_{1\le j < k\le d(\su)}N(\mathsf{c}_k)F_b(\mathsf{c}_j)$ is independently maximized for each $\su$ by visiting $\mathcal{C}(\su)$ in descending order of average bid, so average-bid DFS is a welfare optimal DFS traversal. 

    \textit{Runtime analysis:} A radix tree can be constructed in $O(M+n)=O(M)$ expected time using hash tables for child lookup. 
    Let $\sT = (V,E)$ be the radix tree and let $d(\su)$ denote the number of children of internal node $\su$. 
    Note that $d(\su) \leq \nu$ by definition, and let $I$ be the total number of internal nodes. 
    It takes $O(|V|)$ time to compute $F_b(\su)$ for every node, since each node is processed a constant number of times across all computations. 
    Sorting at each internal node $\su$ takes $O(d(\su) \log(d(\su)))$ time. 
    The time it takes to sort all nodes is $O(\sum_{\su} d(\su) \log(d(\su))) = O(\sum_{\su} d(\su) \log \nu) = O(|E| \log \nu) = O(|V| \log \nu)$. 
    A DFS schedule following this order outputs the order in $O(|V|)$ time. 
    Finally, since a radix tree has no internal node with exactly one child, we have that $n + I - 1 = |E| = \sum_u d(u) \geq 2I$, and so $V \leq 2n - 1$. 
\end{proof}

\subsection{Algorithm for Computing VCG Prices}\label{app:vcg_alg}

In this subsection we abbreviate $F_b$ as $F$ for brevity. Let $[x]_+ \coloneq \max\{x,0\}$.

Fix an internal node $\su$ and let $\child_1,\ldots,\child_d$ denote its children sorted in descending average-bid order, so $$\frac{F(\child_1)}{N(\child_1)}\ge\cdots\ge\frac{F(\child_d)}{N(\child_d)}.$$
Consider some user $i$. Let $$Z_{\child_j}[i]\coloneq\sum_{r\in\cR_i\cap\cL(\child_j)}b_r$$ denote user $i$'s aggregate bid across his requests under child $\child_j$. After zeroing out user $i$'s bids, the total bid under child $\child_j$ is $$G_i(\child_j)\coloneq F(\child_j) - Z_{\child_j}[i].$$

\begin{lemma}
    For any user $i$ and internal node $\su$, the welfare externality at $\su$ due to $i$ is $$E_{i}(\su)\coloneq\sum_{1\le j < k\le d}\left[G_i(\child_k)N(\child_j) - G_i(\child_j)N(\child_k)\right]_{+}.$$ Consequently, $(n-1)p_i^{\vcg} = \sum_{\su} E_i(\su).$
\end{lemma}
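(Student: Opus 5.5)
We use the pairwise decomposition of $\Phi$ from the proof of Theorem~\ref{thm:welfare}: for any DFS schedule $\sigma$ and any bid profile $\vb'$,
\[
\Phi(\sigma;\vb') = \sum_{\su}\sum_{\text{ordered sibling pairs }(\sfc,\sd)\text{ of }\su,\ \sfc\text{ before }\sd} N(\sd)\,F_{b'}(\sfc).
\]
The idea is to apply this to two schedules and compare them node by node and pair by pair. The first is the average-bid DFS schedule $\sigma^\star=\Bar{A}(\vb_{1:m})$ under the true bids. The second is the optimal schedule $\sigma^{-i}$ for the profile $(\mathbf{0},\vb_{-i})$. Both are evaluated on the modified profile $\vb'=(\mathbf{0},\vb_{-i})$, whose subtree sums are exactly $F_{b'}(\child_j)=G_i(\child_j)$.

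\textbf{Step 1: rewrite the payment.} Since user $i$'s bids are zero in $\vb'$,
\[
(n-1)\sum_{j\ne i}\langle \vb_j,\vx_j(\sigma^\star)\rangle=\Phi(\sigma^\star;\vb')
\qquad\text{and}\qquad
(n-1)h_i(\vb_{-i})=\Phi(\sigma^{-i};\vb').
\]
Hence $(n-1)p_i^{\vcg}=\Phi(\sigma^{-i};\vb')-\Phi(\sigma^\star;\vb')$. Both schedules lie in $\cD(\sT)$, so both expand as sums over internal nodes $\su$ of a local term depending only on the order of $\cC(\su)$. The payment therefore splits into one difference per node, and it suffices to show that at each $\su$ this local difference equals $E_i(\su)$.

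\textbf{Step 2: the local difference at a node $\su$.} Label the children $\child_1,\dots,\child_d$ in the $\sigma^\star$ order, which is descending $F/N$. The local term of $\sigma^\star$ is $\sum_{j<k}G_i(\child_j)N(\child_k)$.

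By Theorem~\ref{thm:welfare} applied to $\vb'$, $\sigma^{-i}$ orders the children by descending $G_i/N$, and this order maximizes the local sum. For each unordered pair $\{j,k\}$, the maximizing order contributes
\[
\max\{G_i(\child_j)N(\child_k),\,G_i(\child_k)N(\child_j)\},
\]
because sorting by $G_i/N$ places the larger-ratio child first in every pair at once. Ordering by a ratio is a total preorder, so pairwise optimality is simultaneously attainable, and ties contribute equally either way.

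Subtracting, pair by pair with $j<k$,
\[
\max\{G_i(\child_j)N(\child_k),\,G_i(\child_k)N(\child_j)\}-G_i(\child_j)N(\child_k)=\bigl[G_i(\child_k)N(\child_j)-G_i(\child_j)N(\child_k)\bigr]_+ .
\]
Summing over pairs gives exactly $E_i(\su)$. Summing over $\su$ then gives $(n-1)p_i^{\vcg}=\sum_{\su}E_i(\su)$.

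\textbf{Main obstacle.} The delicate point is justifying that the optimal value of the local sum equals the sum of pairwise maxima. This is the exchange argument of Theorem~\ref{thm:welfare}, refined as follows. Because the pairwise preferred orientation is induced by the ratio $G_i/N$, it is transitive, so a single permutation realizes all pairwise maxima simultaneously. Children with $N=0$ do not occur, since every subtree contains at least one leaf.

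We also need $h_i$ to be attained by a DFS schedule whose per-node optimization is independent across nodes. This independence is already given by the decomposition in Step 1, since the local term at each node depends only on the order of that node's children.
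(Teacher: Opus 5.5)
Your proof is correct and follows the paper's argument: the paper likewise compares each sibling pair's contribution $G_i(\child_j)N(\child_k)$ under the actual average-bid order with the pairwise maximum attained by the counterfactual $G_i/N$ order, then sums over pairs and internal nodes. You state two steps the paper leaves implicit, namely writing $(n-1)p_i^{\vcg}$ as a difference of $\Phi$ values on $(\mathbf{0},\vb_{-i})$ and noting that the ratio order attains all pairwise maxima at once; the route is otherwise the same.
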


\begin{proof}
    In the original descending average-bid order, pair $j < k$ of children contributes $G_i(\child_j)N(\child_k)$ to the other users' welfare. The counterfactual optimal ordering sorts children in decreasing order of $G_i / N$, so $j < k$ contributes $\max\{G_i(\child_j)N(\child_k), G_i(\child_k)N(\child_j)\}$ to welfare. So, the externality---that is, the welfare gain in the counterfactual optimum---is precisely $\left[G_i(\child_k)N(\child_j) - G_i(\child_j)N(\child_k)\right]_{+}$. Summing over all child pairs yields the claimed externality $E_i(\su)$. To compute the VCG price it suffices to sum $E_i(\su)$ over all internal nodes and normalize.
\end{proof}

\paragraph{Preprocessing.}
For $t = 1,\ldots, d$, let $$P_F[t]\coloneq\sum_{k=1}^t F(\child_k);\quad P_N[t]\coloneq\sum_{k=1}^t N(\child_k).$$ Computing these prefix sums is a $O(d(\su))$ run-time operation at each node. So, the overall run-time of prefix sum computation is $O(\sum_{\su} d(\su)) = O(n)$. This computation is done once and the same prefix sums are used in the price calculation for all bidders.

\subsubsection{Computing local externalities}

Let $\cA_{\su}[i]\coloneq\{j\in\{1,\ldots,d\} : Z_{\child_j}[i] > 0\}$ denote the set of affected child indices; enumerate $\cA_{\su}[i]$ as $j_1 < \ldots < j_a$ (so $F(\child_{j_1})/N(\child_{j_1})\ge\cdots\ge F(\child_{j_a})/N(\child_{j_a})$). We now show how to compute $E_{i}(\su)$ at a given node, assuming that at each internal node we have access to $F$, $N$, $P_F$, $N_F$, $\cA_{\su}[i]$, and $Z_{\child_{j_s}}[i]$ for all $j_s\in\cA_{\su}[i]$. (We will show how to perform bottom-up computation of $\cA$ and $Z$ in Section~\ref{sec:bottom-up}.)

Pairs of children are of three types: unaffected-unaffected, affected-unaffected, affected-affected. Unaffected-unaffected pairs have zero externality since zeroing out $i$'s bids does not change their relative ordering. 

\paragraph{Affected-unaffected pairs.} Let $\child_{j_s}$ be an affected child whose average bid has decreased from $F(\child_{j_s})/N(\child_{j_s})$ to $G_i(\child_{j_s}) / N(\child_{j_s})$. An unaffected child before $\child_{j_s}$, that is, an unaffected child $\child_k$ with $k < j_s$, stays before $\child_{j_s}$. An unaffected child $\child_k$ with $k > j_s$ overtakes $\child_{j_s}$ if and only if $F(\child_k) / N(\child_k) > G_i(\child_{j_s}) / N(\child_{j_s})$.

Run binary search to find $$p_s\coloneq\max\left\{k\in\{j_s,\ldots,d\} : \frac{F(\child_k)}{N(\child_k)} > \frac{G_i(\child_{j_s})}{N(\child_{j_s})}\right\}.$$ So, all unaffected children in the interval $(j_s, p_s]$ overtake $\child_{j_s}$ in the counterfactual optimal ordering. Let $U_s\coloneq (j_s, p_s]\cap ([d]\setminus\cA_i(\su))$ denote this set of unaffected children. The externality due to all such unaffected children $\child_k, k\in U_s$, from overtaking $\child_{j_s}$ is $F(\child_k)N(\child_{j_s}) - G_i(\child_{j_s})N(\child_k)$, and so the total externality from all affected-unaffected child pairs is $$\sum_{s=1}^a\left(N(\child_{j_s})\sum_{k\in U_s}F(\child_k) - G_i(\child_{j_s})\sum_{k\in U_s}N(\child_k)\right).$$

We now show how to compute this quantity via prefix sums. For $s = 1,\ldots, a$ compute affected child prefix sums $$P_F^A[s]\coloneq\sum_{\ell=1}^s F(\child_{j_{\ell}});\quad P_N^A[s]\coloneq\sum_{\ell=1}^s N(\child_{j_{\ell}}).$$ 

Run binary search to compute $q_s\coloneq\max\{\ell : j_{\ell}\le p_s\}$. Then, $$\sum_{k\in U_s}F(\child_k) = \underbrace{\left(P_F[p_s] - P_F[j_s]\right)}_{\substack{\text{total bid under} \\ \text{children in }(j_s,p_s]}} - \underbrace{\left(P_F^A[r_s] - P_F^A[s]\right)}_{\substack{\text{total bid under} \\ \text{affected children in }(j_s,p_s]}}\eqcolon S_F[s]$$ and $$\sum_{k\in U_s}N(\child_k) = \underbrace{\left(P_N[p_s] - P_N[j_s]\right)}_{\substack{\text{total leaves under} \\ \text{children in }(j_s,p_s]}} - \underbrace{\left(P_N^A[r_s] - P_N^A[s]\right)}_{\substack{\text{total leaves under} \\ \text{affected children in }(j_s,p_s]}}\eqcolon S_N[s].$$ So, the total externality from affected-unaffected pairs is \begin{equation}\label{eq:AU-externality}\sum_{s=1}^a \left(N(\child_{j_s})S_F[s] - G_i(\child_{j_s})S_N[s]\right).\end{equation}

\paragraph{Affected-affected pairs.}

Sort the affected children in descending order of modified average bid. Let $\pi$ denote the sorting order, so $$\frac{G_i(\child_{j_{\pi(1)}})}{N(\child_{j_{\pi(1)}})}\ge\cdots\ge \frac{G_i(\child_{j_{\pi(a)}})}{N(\child_{j_{\pi(a)}})}.$$ The total externality from pairs of affected children is \begin{equation}\label{eq:AA-externality}\sum_{1\le s < t\le a}G_i(\child_{j_{\pi(s)}})N(\child_{j_{\pi(t)}}) - \sum_{1\le s < t\le a}G_i(\child_{j_{s}})N(\child_{j_{t}}).\end{equation}

For any ordering $\ell_1,\ldots,\ell_a$ of affected children, a sum of the form $\sum_{1\le s < t\le a}G_i(\child_{\ell_s})N(\child_{\ell_t})$ can be computed in $O(a)$ run-time: initialize $(R,C)\leftarrow (\sum_{t=1}^a N(\child_{\ell_t}), 0)$; for $s = 1,\ldots, a$ do $R\leftarrow R - N(\child_{\ell_s}), C\leftarrow C + G_i(\child_{\ell_s})R$; return $C$.

Finally, the total externality $E_i(\su)$ is the sum of expressions~\eqref{eq:AU-externality} and~\eqref{eq:AA-externality}.
We perform for each affected child one binary search on a list of size $O(d(\su))$ and another on a list of size $a$. Prefix sum computation of $P_F^A, P_N^A$ and affected-affected pair externality computation are $O(a)$ run-time computations. Sorting takes run-time $O(a\log a)$. So, the overall run-time for computing $E_i(\su)$ is $O(a\log d(\su))$.

\subsubsection{Computing all prices}\label{sec:bottom-up}

Let $\textsc{LocalExternality}(\su, \cA_u[i])$ return $E_i(\su)$, where $\cA_u[i] = [(j_1, Z_{\child_{j_1}}[i]),\ldots, (j_a, Z_{\child_{j_a}}[i])]$ is a list containing the children of $\su$ affected by $i$ and $Z_{\child_{j_s}}[i]$ is $i$'s aggregate bid under $\child_{j_s}$.

\begin{figure}[t]
\centering
\begin{minipage}[c]{0.42\linewidth}
\vspace{0pt}
\begin{algorithm}[H]
\SetAlgoLined
\SetAlgoNoEnd
Initialize $P_i\leftarrow 0$ for each user $i$\\
Let $\su$ be the root of the radix tree $\sT$\\
$\textsc{Process}(\su)$\\
\textbf{return} $[P_i/(n-1)]_{i=1}^m$
\caption{VCG Payments}
\label{alg:vcg_payments}
\end{algorithm}
\end{minipage}%
\hfill
\begin{minipage}[c]{0.52\linewidth}
\vspace{0pt}
\begin{algorithm}[H]
\SetAlgoLined
\SetAlgoNoEnd

\If{$\su$ \textup{is a leaf with request }$r\in\cR_i$}{
    \If{$b_r > 0$}{
    $Z_{\su} \leftarrow \{i\mapsto b_r\}$
    }
    \ElseIf{$b_r = 0$}{
    $Z_{\su}\leftarrow \emptyset$
    }
    \textbf{return} $Z_{\su}$
}

Initialize $Z_{\su}\leftarrow \emptyset, \cA_{\su}\leftarrow\emptyset$

\For{$j=1,\ldots,d(\su)$}{
    $Z_{\child_{j}}\leftarrow\textsc{Process}(\child_{j})$\\

    \For{$(i, z)\in Z_{\child_j}$}{
        $\cA_{\su}[i].\mathtt{append}((j, z))$\\
        $Z_{\su}[i]\leftarrow Z_{\su}[i] + z$
    }
}

\For{$i\in\cA_{\su}.\mathtt{keys}$}{

    $P_i\leftarrow P_i + \textsc{LocalExternality}(\su, \cA_{\su}[i])$
}
\textbf{return} $Z_{\su}$

\caption{$\textsc{Process}(\su)$}
\label{alg:process}
\end{algorithm}
\end{minipage}
\end{figure}

Algorithm~\ref{alg:vcg_payments} computes VCG payments via a bottom-up computation of affected children and aggregate bids (Algorithm~\ref{alg:process}). 

\subsubsection{Run-time analysis}

Global prefix sum arrays $P_F, P_N$ can be constructed in $O(\sum_{\su}d(\su))=O(n)$ run-time.

Let $\Gamma\coloneq\sum_{\su}\sum_i |A_{\su}[i]|$. Observe that
$
\Gamma
=
\sum_{\su}\sum_{j=1}^{d(\su)}|Z_{\child_j}|
=
\sum_{\su\neq \operatorname{root}(\sT)}|Z_{\su}|.$
Indeed, every entry $(i,z)\in Z_{\child_j}$ appends exactly one pair
$(j,z)$ to $\cA_{\su}[i]$, and every element of every affected-child
list arises in this way.

$\textsc{Process}$ called on the root of $\sT$ visits every node once, and every entry of the hash map $Z_{\child_j}$ is visited. Each call to $\textsc{LocalExternality}(u, \cA_{\su}[i])$ incurs $O(|\cA_{\su}[i]|\log d(\su))$ run-time. So, the total run-time across all calls to $\textsc{LocalExternality}$ is $O(\sum_{\su}\sum_{i}|\cA_{\su}[i]|\log d(\su)) \le O(\Gamma\log\nu)$.

The total run-time of $\textsc{Process}$ on the root of $\sT$ (Algorithm~\ref{alg:vcg_payments}) is therefore $O(n + \Gamma\log\nu)$.

We next bound $\Gamma$ in terms of the input size. For each non-root
node $\su$ and each entry $(i,z)\in Z_{\su}$, charge this entry to an
arbitrary positive-bid request $r\in\cR_i\cap\cL(\su)$. A request $r$ can receive
at most one charge from each non-root node on its root-to-leaf path, and
so its total charge is at most $\operatorname{depth}_{\sT}(r)$.
Consequently, $\Gamma
=
\sum_{\su\neq\operatorname{root}(\sT)}|Z_{\su}|
\le
\sum_{r=1}^n\operatorname{depth}_{\sT}(r)\le M + n = O(M).$

Also, $\Gamma = O(mn)$, since each map $Z_{\su}$ contains at most one entry per user and the
radix tree has $O(n)$ nodes. So, $\Gamma = O(\min\{M, mn\})$.

We have therefore proven Theorem~\ref{thm:payments}.

\thmprices*
\section{Appendix for Section~\ref{sec:autobidding}: Autobidding Agents for Inference Auctions}

\thmab*
\begin{proof}[Proof Sketch]
    The framework of~\citet{balseiro2020dual} requires (1) a convex and bounded action set that contains a ``do nothing'' action, (2) nonnegative, bounded, concave rewards, and (3) nonnegative, bounded resource consumption. 
    To obtain the required action set, we first subtract the priority received by a zero bid, which is independent of the autobidder’s report, and consider only additional priority. 
    For any fixed auction instance, the finitely many feasible schedules induce a finite menu of priority-and-payment outcomes. 
    Allowing randomization over this menu produces a bounded and convex action set with zero as a feasible option, as well as nonnegative and bounded expected utility. 
    Under VCG, the autobidder’s paced bid selects exactly the menu option that maximizes incremental utility after penalizing its payment by the current budget multiplier. 
    Thus, the two algorithms coincide until the remaining budget is no greater than the largest possible payment. 
    Finally, the randomized hindsight benchmark is at least as strong as our original deterministic benchmark in Definition~\ref{def:reg}.
\end{proof}
\begin{table}[htpb]
    \centering
    \begin{tabular}{p{2.6cm} c c}
        \toprule
         & Our Notation & \citet{balseiro2020dual}\\
        \midrule
        Time Horizon: & $T$ & $T$ \\
        Capacity: & $T\gamma$ & $T\rho$ \\
        Action Set: & $\mathcal{Y} =\{\vy\geq\mathbf{0} : \sum_k\vy[k]\leq 1\}$ & Generic action set $\mathcal{X}$ \\
        Decision\\ Variable: & Menu selection vector $y \in \mathcal{Y}$ & Action $x_t \in \mathcal{X}$ \\
        Reward\\ Function: & $\sum\limits_{k} (\langle\boldv^{(t)}_1,\va^{(t)}[k]\rangle -\vp^{(t)}[k])\vy[k]$ & $f_t(x)$ \\
        Resource\\ Consumption: & $\langle \vy, \vp^{(t)} \rangle$ & $b_tx_t$ \\
        Dual Variable: & $\mu^{(t)}$ & $\mu_t$ \\
        Primal\\ Objective:
        & $\max\limits_{\vy\in\mathcal{Y}} \sum\limits_{k} (
           \langle\boldv^{(t)}_1,\va^{(t)}[k]\rangle -(1+\mu^{(t)}) \vp^{(t)}[k] )\vy[k]$
        & $\max\limits_{x_t\in\mathcal{X}} \{f_t(x_t)-\mu_t^\top b_tx_t\}$ \\
        \bottomrule
    \end{tabular}
    \caption{Notation translation between our setting and~\citet{balseiro2020dual}.}
    \label{tab:detailed_notation_mapping}
\end{table}
\begin{proof}
Since bids, values, and prompts are all finite, $\cP$ will have finite support without loss of generality.

Define the shorthand $\vx^{(t)}(\vb_1) \coloneq \vx_1(\Bar{A}(\vb_1, \vb_{2:m}^{(t)}))$. 
Because the zero bid vector can receive positive priority for free, define $\vx^{(t,0)}\coloneq \vx^{(t)}(\mathbf{0})$ and $\va^{(t)}(\vb_1)\coloneq \vx^{(t)}(\vb_1) - \vx^{(t,0)}$. 
Thus $\va^{(t)}(\vb_1)$ is the vector of \emph{incremental} priorities obtained through bidding according to vector $\vb_1$ in round $t$. 
The baseline value $\sum_{t=1}^T \langle \boldv^{(t)}_1, \vx^{(t,0)} \rangle$ is independent of the autobidder’s reports and therefore does not affect either the optimal bidding policy or regret.

Fix a round $t$ and hold the other users' requests and bids fixed. 
As the autobidder varies its bid vector $\vb_1\in\mathbb{R}_+^{|\mathcal{R}_1^{(t)}|}$, the VCG mechanism presents it with a finite menu of incremental-priority vector and payment pairs 
\begin{equation*}
    \cM^{(t)} \coloneq \left\{ \left(\va^{(t)}(\vb_1), p^{(t)}(\vb_1) \right) : \vb\in\mathbb{R}_+^{|\mathcal{R}_1^{(t)}|} \right\}.
\end{equation*}
Because this menu is finite, we can vectorize its distinct elements as $\cM^{(t)} = \{(\va^{(t)}[k], \vp^{(t)}[k]) \}_{k=1}^{K^{(t)}}$, where $\va^{(t)}[k]$ is itself a vector of length $|\cR_1^{(t)}|$.\footnote{If $\langle \boldv^{(t)}_1, \va^{(t)}[k] \rangle - \vp^{(t)}[k] < 0$, we replace the item with $(\mathbf{0}, 0)$. This is without loss of generality since the bidder can always obtain zero incremental utility by bidding nothing.} 
Since every menu outcome is induced by a feasible DFS schedule, we have that $K^{(t)}\leq n!$. 
We can ``convexify'' the menu $\cM^{(t)}$ by introducing a vector $\vy \in \mathbb{R}_+^{n!}$ with $\sum_{k=1}^{n!} \vy[k] \leq 1$, where we interpret $\vy[k]$ as the probability of selecting menu item $k$.\footnote{If $K^{(t)} < n!$, we can pad $\vy$ with zeros at the end.} 
Let $\mathcal{Y}$ be the set of all such vectors $\vy$.  

Define $f^{(t)}(\vy) \coloneq  \sum_{k=1}^{n!} ( \langle \boldv^{(t)}_1, \va^{(t)}[k] \rangle - \vp^{(t)}[k]) \vy[k]$ and $\langle \vc^{(t)}, \vy \rangle \coloneq  \sum_{k=1}^{n!} \vp^{(t)}[k] \vy[k]$. 
This now is a valid instance of the online allocation framework of~\citet{balseiro2020dual} with one resource: 
The action set $\mathcal{Y}$ is convex, bounded, and contains the zero action. 
The reward function and resource consumption are both nonnegative and linear. 
Because the autobidder has at most $K$ requests and every request
value is at most $C$, $\langle \boldv^{(t)}_1,\va^{(t)}[k]\rangle \leq \|\boldv_1^{(t)}\|_1 \leq KC$. 
After replacing negative utility menu items with the zero item, every remaining menu item satisfies $0 \leq \langle \boldv^{(t)}_1,\va^{(t)}[k]\rangle-\vp^{(t)}[k] \leq KC$ and $0 \leq \vp^{(t)}[k] \leq \langle \boldv^{(t)}_1,\va^{(t)}[k]\rangle \leq KC$. 
Thus rewards and resource consumption are both bounded by $KC$, and the resource capacity is $T\gamma$.

Algorithm~\ref{alg:ogd} is the same as Algorithm 1 in~\citet{balseiro2020dual} specialized to online gradient descent and using our notation, up to its stopping time. 
To see this, fix a round $t$ and hold $\sT^{(t)}$ and $\vb_{2:m}^{(t)}$ fixed.
Let $W_{-1}^{(t)}(\sigma)$ denote the welfare of all players except the autobidder under $\sigma$, and let \(h^{(t)}_1 \coloneq  \max_{\sigma \in \cD(\sT^{(t)})}W_{-1}^{(t)}(\sigma)\). 
Under VCG, bidding according to bid vector $\vb_1$ selects 
\begin{equation*}
    \Bar{A}^{(t)}(\vb_1, \vb_{2:m}^{(t)})\in\arg\max_{\sigma \in \cD(\sT^{(t)})} \{ \langle \vb_1, \vx_1(\sigma) \rangle + W_{-1}^{(t)}(\sigma)\}
\end{equation*}
and incurs the user-level payment $p^{(t)}(\vb_{1},\vb_{2:m}^{(t)})=h^{(t)}_1 - W_{-1}^{(t)}(\Bar{A}^{(t)}(\vb_1, \vb_{2:m}^{(t)}))$. 
Substituting in the definitions of $\va^{(t)}$ and $p^{(t)}$ gives
\begin{equation*}
    \langle \vb_1,\va^{(t)}(\vb_1)\rangle- p^{(t)}(\vb_{1}, \vb_{2:m}^{(t)}) = \langle \vb_1,\vx^{(t)}(\vb_1)\rangle + W_{-1}^{(t)}(\Bar{A}^{(t)}(\vb_1, \vb_{2:m}^{(t)})) - (\langle \vb_1,\vx^{(t)}(\mathbf{0})\rangle + h_1^{(t)}).
\end{equation*}
For a fixed bid vector $\vb_1$, the final term on the right-hand side does not depend on the schedule that is selected. 
Therefore, maximizing $\langle \vb_1,\va^{(t)}(\vb_1)\rangle- p^{(t)}(\vb_1, \vb_{2:m}^{(t)})$ is equivalent to maximizing $\langle \vb_1,\vx_1(\sigma)\rangle + W_{-1}^{(t)}(\sigma)$. 
This is exactly reported total welfare, which is precisely what the VCG allocation rule maximizes. 

Using our notation, the primal step of the dual mirror descent algorithm of~\citet{balseiro2020dual} selects a menu item maximizing $\langle \boldv^{(t)}_1, \va \rangle - (1+\mu^{(t)})p$. 
The VCG mechanism implements this menu choice when the submitted bid vector is $\Tilde{\vb}^{(t)}= \frac{\boldv^{(t)}_1}{1 + \mu^{(t)}}$.

Let $\tau$ be the stopping time used by~\citet{balseiro2020dual}, namely the first round after which cumulative spending is within $CK$ of the total budget $T\gamma$. 
For every $t\leq \tau$, minimality of the stopping time implies that $B^{(t)}>CK$. 
Because $\|\alpha^{(t)} \boldv_1^{(t)}\|_1 \leq CK<B^{(t)}$ in every round $t \leq \tau$, both Algorithm~\ref{alg:ogd} and Algorithm 1 in~\citet{balseiro2020dual} therefore execute the menu item induced by $\alpha^{(t)} \boldv_1^{(t)}$, incur the same payment, and perform the same dual update in every round through $\tau$. 
The proof of Theorem 1 in~\citet{balseiro2020dual} bounds regret by the primal-dual gap accumulated through $\tau$, together with a term $CK\mathbb{E}[T-\tau]$ accounting for the remaining rounds. 
The former depends only on the algorithms’ behavior through $\tau$, while their stopping-time bound controls the latter. 
Since rewards are nonnegative, the rewards earned after $\tau$ may be discarded when lower-bounding Algorithm~\ref{alg:ogd}’s total reward. Thus the algorithms need not coincide after $\tau$.
All assumptions for Theorem 1 in~\citet{balseiro2020dual} now hold, and so translating it into our notation gives the desired regret guarantee. 
Finally, it is worth noting that the guarantees of~\citet{balseiro2020dual} are with respect to an adversary optimizing over $\mathcal{Y}$. 
However the convexified hindsight benchmark can only be larger than the deterministic hindsight optimum, so this bound holds in our setting as well.\looseness-1 

The regret bound in~\citet{balseiro2020dual} give us the guarantee 
\begin{equation*}
    R(T) \leq 2(C^2 K^2 + \gamma^2)\eta T + \frac{C K}{\gamma\eta} \left(\frac{C K}{\gamma}+1\right) + \frac{C^2 K^2}{\gamma}.
\end{equation*}
Setting $\eta = \sqrt{\frac{\frac{CK}{\gamma}(\frac{CK}{\gamma} + 1)}{2T(C^2K^2 + \gamma^2)}}$ gives us a regret bound of 
\begin{equation*}
    R(T) \leq 2\sqrt{\frac{CK}{\gamma} (\frac{CK}{\gamma} + 1) \cdot 2 (C^2 K^2 + \gamma^2) T} + \frac{C^2 K^2}{\gamma},
\end{equation*}
as long as $T \geq \frac{C^3 K^3}{2\gamma(C^2 K^2 + \gamma^2)}(\frac{CK}{\gamma} + 1)$.
    
\end{proof}

\begin{lemma}[DFS Welfare Decomposition]
    If a DFS schedule $\sigma$ visits the children $\sfc_1, \ldots, \sfc_{d(\su)}$ of each internal node $\su$ in that order, then 
    \begin{equation}\label{eq:decomp1}
        (n-1) W(\sigma; \boldv) = \sum_{\su} \sum_{1 \leq j < k \leq d(\su)} N(\sfc_j) N(\sfc_k) \Bar{v}(\sfc_j).
    \end{equation}
    Consequently, 
    \begin{equation}\label{eq:decomp2}
        (n-1) \max_{\sigma \in \cD(T)} W(\sigma; \boldv) = \sum_{\su} \sum_{\{\sfc,\sd\} \subseteq \cC(\su)} N(\sfc) N(\sd) \max\{\Bar{v}(\sfc), \Bar{v}(\sd)\}.
    \end{equation}
\end{lemma}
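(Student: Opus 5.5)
Here $\Bar{v}(\sfc) \coloneq F_v(\sfc)/N(\sfc)$ denotes the average value in the subtree of $\sfc$, so that $F_v(\sfc) = N(\sfc)\Bar{v}(\sfc)$. With this, the lemma is the same computation as in the proof of Theorem~\ref{thm:welfare}, carried out with values instead of bids.

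For \eqref{eq:decomp1}, start from $(n-1)W(\sigma;\boldv) = \sum_r v_r \cdot |\{s : \sigma(s) > \sigma(r)\}|$, which counts ordered pairs $(r,s)$ with $r$ scheduled before $s$, each weighted by $v_r$. Every unordered pair of distinct request leaves $\{r,s\}$ has a unique lowest common ancestor $\su$. The two leaves lie in distinct children $\sfc_j \ni r$ and $\sfc_k \ni s$ of $\su$. Because $\sigma$ is a DFS schedule, the leaves of $\sfc_j$ form a contiguous block, as do those of $\sfc_k$. Hence $r$ precedes $s$ exactly when $j<k$ in the visiting order at $\su$.

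Grouping the pairs by their lowest common ancestor and by the pair of children $(j,k)$ with $j<k$, the total contribution of that block is $\sum_{r\in\cL(\sfc_j)}\sum_{s\in\cL(\sfc_k)} v_r = N(\sfc_k)F_v(\sfc_j) = N(\sfc_j)N(\sfc_k)\Bar{v}(\sfc_j)$. Summing over all internal $\su$ and all $j<k$ gives \eqref{eq:decomp1}, since every ordered pair of leaves is counted exactly once.

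For \eqref{eq:decomp2}, observe that the DFS schedules are exactly the choices of an independent ordering of the children at each internal node. In \eqref{eq:decomp1}, the term for node $\su$ depends only on the ordering chosen at $\su$. So the maximum over $\cD(\sT)$ splits into a sum of independent per-node maxima.

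At a fixed node, the ordering given by Theorem~\ref{thm:welfare} with bids replaced by values sorts the children in decreasing $\Bar{v}$. Under this ordering, each unordered pair $\{\sfc,\sd\}$ has its higher-average child first. It therefore contributes $N(\sfc)N(\sd)\max\{\Bar{v}(\sfc),\Bar{v}(\sd)\}$, which is also the largest value that pair can contribute under any ordering. Because a single ordering attains this upper bound for every pair at once, the per-node maximum equals the sum of the pairwise maxima. Summing over nodes yields \eqref{eq:decomp2}.

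The only delicate point is the double counting in the first part: each pair of leaves must be attributed to exactly one node, namely its lowest common ancestor. This is guaranteed by the tree structure and by the leaves being distinct auxiliary nodes.
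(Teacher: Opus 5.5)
Your proof is correct and follows the paper's route. You decompose $(n-1)W$ into per-node sums over ordered pairs of children, where your lowest-common-ancestor bookkeeping makes explicit what the proof of Theorem~\ref{thm:welfare} states directly, and then optimize each node's child ordering independently by sorting on average value. The one small difference is that you certify per-node optimality by noting that the sorted order attains every pairwise maximum $N(\sfc)N(\sd)\max\{\Bar{v}(\sfc),\Bar{v}(\sd)\}$ at once, rather than using the adjacent-swap exchange argument; this gives the second identity directly and handles ties with no extra care.
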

\begin{proof} 
    Both equations follow from the same reasoning as in the proof of Theorem~\ref{thm:welfare}.
\end{proof}

\thmcondition*
\begin{proof}
    Fix an internal node $\su$ and an unordered pair of children $\{\sfc, \sd\}$. 
    Without loss of generality suppose that $\Bar{v}(\sfc) \geq \Bar{v}(\sd)$. 
    The maximum welfare contribution of this pair is therefore $N(\sfc) N(\sd) \Bar{v}(\sfc)$. 
    We now compare this with the contribution under the average-bid order:
    \begin{enumerate}
        \item If $\Bar{A}(\vb_{1:m})$ places $\sfc$ before $\sd$, then the pair contributes exactly $N(\sfc) N(\sd) \Bar{v}(\sfc)$, and so it obtains its maximum possible contribution.
        \item Now suppose that $\Bar{A}(\vb_{1:m})$ places $\sd$ before $\sfc$. 
        Because the children are ordered by descending average bid, $\Bar{b}(\sd) \geq \Bar{b}(\sfc)$. 
        By Definition~\ref{def:subtree_pacing}, we have that $\Bar{b}(\sfc) = \alpha(\sfc) \Bar{v}(\sfc)$ and $\Bar{b}(\sd) = \alpha(\sd) \Bar{v}(\sd)$, and so $\alpha(\sd) \Bar{v}(\sd) \geq \alpha(\sfc) \Bar{v}(\sfc)$. 
        Rearranging terms, we have that 
        \begin{equation*}
            \frac{\Bar{v}(\sd)}{\Bar{v}(\sfc)} \geq \frac{\alpha(\sfc)}{\alpha(\sd)} \geq \frac{1}{\kappa_{\su}}
        \end{equation*}
        and therefore the pair's welfare contribution is $N(\sfc) N(\sd) \Bar{v}(\sd) \geq \frac{1}{\kappa_{\su}} N(\sfc) N(\sd) \Bar{v}(\sfc)$. 
    \end{enumerate}

    Therefore, whether or not the average-bid order agrees with the average value order, every unordered pair of children of $\su$ obtains at least $1/\kappa_{\su}$ of its maximum possible welfare contribution. 
    Summing over all unordered pairs and internal nodes, we get that 
    \[\begin{aligned}
        (n-1) W(\sigma; \boldv_{1:m}) &= \sum_{\su} \sum_{\sfc,\sd \in \cC(\su), \sfc \prec_{\Bar{A}(\vb_{1:m})} \sd} N(\sfc) N(\sd) \Bar{v}(\sfc) \\ &\geq \sum_{\su} \frac{1}{\kappa_{\su}} \sum_{\{\sfc, \sd\} \subseteq \cC(\su)} N(\sfc) N(\sd) \max\{\Bar{v}(\sfc), \Bar{v}(\sd)\}
    \end{aligned}\]
    by Equation~\ref{eq:decomp1}. 
    To obtain the final result, we can apply Equation~\ref{eq:decomp2} to get
    \begin{equation*}
        \frac{W(\Bar{A}(\vb_{1:m}); \boldv_{1:m})}{\max_{\sigma \in \cD(T)} W(\sigma; \boldv_{1:m})} \geq \frac{\sum_{\su} \frac{1}{\kappa_{\su}} \sum_{\{\sfc, \sd\} \subseteq \cC(\su)} N(\sfc) N(\sd) \max\{\Bar{v}(\sfc), \Bar{v}(\sd)\}}{\sum_{\su} \sum_{\{\sfc,\sd\} \subseteq \cC(\su)} N(\sfc) N(\sd) \max\{\Bar{v}(\sfc), \Bar{v}(\sd)\}} \geq \frac{1}{\max_{\su} \kappa_{\su}}.
    \end{equation*}
    where $\Omega_{\su} \coloneq \frac{1}{n-1} \sum_{\{\sfc, \sd\} \subseteq \cC(\su)} N(\sfc) N(\sd) \max\{F_v(\sfc) / N(\sfc), F_v(\sd) / N(\sd)\}$. 
\end{proof}

\begin{figure}[t]
\centering
\definecolor{dfsgray}{RGB}{120,125,135}
\definecolor{flipred}{RGB}{210,40,40}
\begin{tikzpicture}[
  tok/.style={draw, fill=white, inner sep=3pt, font=\small},
  br/.style={circle, fill, inner sep=1.5pt},
  leaf/.style={circle, draw, inner sep=1pt, font=\scriptsize},
  val/.style={anchor=west, xshift=1pt, font=\small},
  alph/.style={font=\scriptsize},
  kap/.style={anchor=west, xshift=4pt, font=\scriptsize},
  edge/.style={line width=1.1pt, draw=dfsgray},
]
  \node[tok] (s)  at (0,0)    {Summarize};
  \node[br, flipred] (d0) at (0,-0.8) {};
  \draw[edge] (0,0.8) -- (s) -- (d0);
  \node[kap, text=flipred] at (d0) {$\kappa_0 = 0.64/0.5 = 1.28$};

  \node[tok] (art) at (-3.3,-1.7) {the article};
  \node[tok] (em)  at ( 2.5,-1.7) {this email};
  \node[alph, anchor=east] at (art.west) {$\alpha = 0.5$\,};
  \node[alph, anchor=west] at (em.east)  {\,$\alpha = 0.64$};
  \node[br, flipred] (d1) at (-3.3,-2.7) {};
  \node[br]          (d2) at ( 2.5,-2.7) {};
  \draw[edge] (d0) -- (art) -- (d1);
  \draw[edge] (d0) -- (em)  -- (d2);
  \node[kap, text=flipred] at (d1) {$\kappa_1 = 0.75/0.3 = 2.5$};
  \node[kap] at (d2) {$\kappa_2 = 0.9/0.6 = 1.5$};

  \node[tok] (cats)  at (-5.2,-3.6) {about cats};
  \node[tok] (fr)    at (-3.3,-3.6) {in French};
  \node[tok] (bul)   at (-1.4,-3.6) {as bullets};
  \node[tok] (formy) at ( 1.3,-3.6) {for my};
  \node[tok] (brf)   at ( 3.8,-3.6) {briefly};
  \node[alph, anchor=east] at (formy.west) {$\alpha = 0.6$\,};
  \draw[edge] (d1) -- (cats);
  \draw[edge] (d1) -- (fr);
  \draw[edge] (d1) -- (bul);
  \draw[edge] (d2) -- (formy);
  \draw[edge] (d2) -- (brf);

  \node[br] (d3) at (1.3,-4.4) {};
  \draw[edge] (formy) -- (d3);
  \node[kap] at (d3) {$\kappa_3 = 0.85/0.2 = 4.25$};
  \node[tok] (boss) at (0.5,-5.2) {boss};
  \node[tok] (team) at (2.1,-5.2) {team};
  \draw[edge] (d3) -- (boss);
  \draw[edge] (d3) -- (team);

  \foreach \r/\x/\v/\a/\src in {1/-5.2/9/0.4/cats, 3/-3.3/6/0.75/fr,
                                2/-1.4/3/0.3/bul, 5/0.5/8/0.85/boss,
                                6/2.1/5/0.2/team, 4/3.8/2/0.9/brf} {
    \node[leaf] (l\r) at (\x,-6.1) {\r};
    \node[val] at (l\r.east) {\$\v};
    \node[alph] at (\x,-6.65) {$\alpha = \a$};
    \draw[edge] (\src) -- (l\r);
  }
\end{tikzpicture}
\caption{Welfare under heterogeneous pacing. Leaves show true values $v_r$
and pacing factors $\alpha_r$, so that $b_r = \alpha_r v_r$. Internal
subtrees are labeled with their value-weighted pacing factor
$\alpha(\sfc) = F_b(\sfc)/F_v(\sfc)$, and $\kappa_{\su}$ is the ratio of the largest to
smallest $\alpha(\sfc)$ among the children of $\su$. Red nodes are where paced
bids reverse the value-optimal order. At the root, ``this email'' now has
the higher average bid ($3.2$ versus $3$), and under ``the article,''
request~3 now outbids request~1 ($4.5$ versus $3.6$). The realized schedule
is $5, 6, 4, 3, 1, 2$, which attains $87/99 \approx 0.88$ of the optimal
DFS welfare. Theorem~\ref{thm:condition} guarantees at least
$\sum_{\su} \Omega_{\su}/\kappa_{\su} \big/ \sum_{\su} \Omega_{\su} \approx 0.63$, and hence at
least $1/\max_{\su} \kappa_{\su} \approx 0.24$. Node~3 has the largest condition
number but no reversal, so it loses nothing.}
\label{fig:pacing}
\end{figure}
\section{Appendix for Section~\ref{sec:expts}: Experiments}\label{app:expts}

\subsection{Setup Details}\label{app:protocol}

We use SGLang v0.5.17\footnote{\url{https://github.com/sgl-project/sglang}} serving DeepSeek-R1-Distill-Qwen-32B in bf16 with
the engine's defaults, except that we fix its static memory fraction at
$0.85$, which leaves a 23k-token KV cache. Prefix caching and LRU
eviction are handled by SGLang. Every query decodes at most 128
tokens, so TTFT is dominated by prefill and caching. The queue in front
of the server holds $Q$ queries, splits them into $Q/n$ batches, orders
each batch with one of our algorithms,
and sends them sequentially without pausing the
engine. The cache is cleared once per experiment (not between batches).
The queue sends about 20 queries per second and the engine serves 3 to
4, so the engine's own queue has a few hundred entries for the whole run;
TTFT is almost entirely time spent waiting in the queue (in the order we
sent). The queue tokenizes prompts with the model's tokenizer, so the
tree it optimizes is the same one built by SGLang.

We ran the experiments on three H100 nodes with slightly varying hardware.
Welfare and hit rate do not depend on the machine; TTFT does, so throughout the paper we
only compare TTFTs measured on the same machine. The threshold,
correlation, and long-tree experiments are from the second machine and
the one-cluster experiment from the third; everything else is from the first.

Welfare is measured with respect to the final ordering of queries ranked by TTFT. It agrees with the welfare of the planned average-bid DFS order to within $1.3\%$ for
all results, so we report only the realized value.
Raw welfare varies substantially from trial to trial because a heavy-tailed
draw puts a different amount of value in the batch each time, and it
moves every algorithm together (Figure~\ref{fig:raw}), which is why we
plot the percentage over $\tfrac12\sum_r b_r$. Cache hit rate is defined as the number of cached
prompt tokens divided by the total number of prompt tokens, as reported by SGLang. All completion times, when reported, record the time from the first send to the last
completion of the batch.
The queue itself is a Python prototype: for a
batch of $1024$ queries it computes the order in under a millisecond
and the VCG payments in $0.3$\,s by naive per-user recomputation
(Appendix~\ref{app:fastvcg} has the fast version), while actually
serving the batch takes on the order of minutes.

\subsection{Workloads}\label{app:workloads}

\mixsharename\ mixes three kinds of queries: SQuAD questions, 32 per
article, with the article as a shared prefix; GSM8K questions over one
of 12 few-shot prefixes; and singletons that share only a short
instruction. In \mixshare{P}, $P\%$ of the queries belong
to one of the shared prefix clusters. \agent\ replays logged SWE-agent trajectories, 
where each call's prompt
is the history up to that call, so call $k$ is a prefix of call $k+1$.
\tot\ replays tree-of-thought on GSM8K (3-ary thought trees of depth 3).
\totlong\ is the same, with a SQuAD article of 3-5k tokens as the root of every tree;
since each thought is a single sentence, nearly every token of every
prompt is the article, which is why its shared prefixes are so long and
its unique suffixes so short.
Table~\ref{tab:workloads} gives the tree statistics of one $Q=1024$
instance of each and Figure~\ref{fig:workloads} the per-query
distributions. Note the third column of the table:
every workload but \tot\ exceeds the KV cache several times over.

\begin{table}[htbp]

\centering\scriptsize
\begin{tabular}{lrrrrrrrr}
\toprule
 & total & distinct & distinct $/$ & shared & \multicolumn{3}{c}{median tokens per query} & queries at \\
workload & tokens & tokens & KV cache & fraction & prompt & shared prefix & unique suffix & interior nodes \\
\midrule
stars & 2.05M & 781k & 33.7 & 0.62 & 1086 & 381 & 57 & 0 \\
chains      & 3.32M & 437k & 18.9 & 0.87 & 2854 & 2852 & 0 & 876 \\
short trees   & 168k  & 23k  & 1.0  & 0.86 & 159  & 147  & 12 & 357 \\
long trees & 4.07M & 124k & 5.4 & 0.97 & 4143 & 4127 & 15 & 335 \\
\bottomrule
\end{tabular}

\caption{Radix-tree statistics of one $Q=1024$ instance of each
workload. Distinct tokens is the total edge length of the tree; a
query's shared prefix is its longest common prefix with any other
query. Stars corresponds to \mixshare{50}, chains corresponds to \agent, short trees corresponds to \tot, and long trees corresponds to \totlong.}
\label{tab:workloads}
\end{table}

\begin{figure}[htbp]
    \centering
    \includegraphics[width=0.92\linewidth]{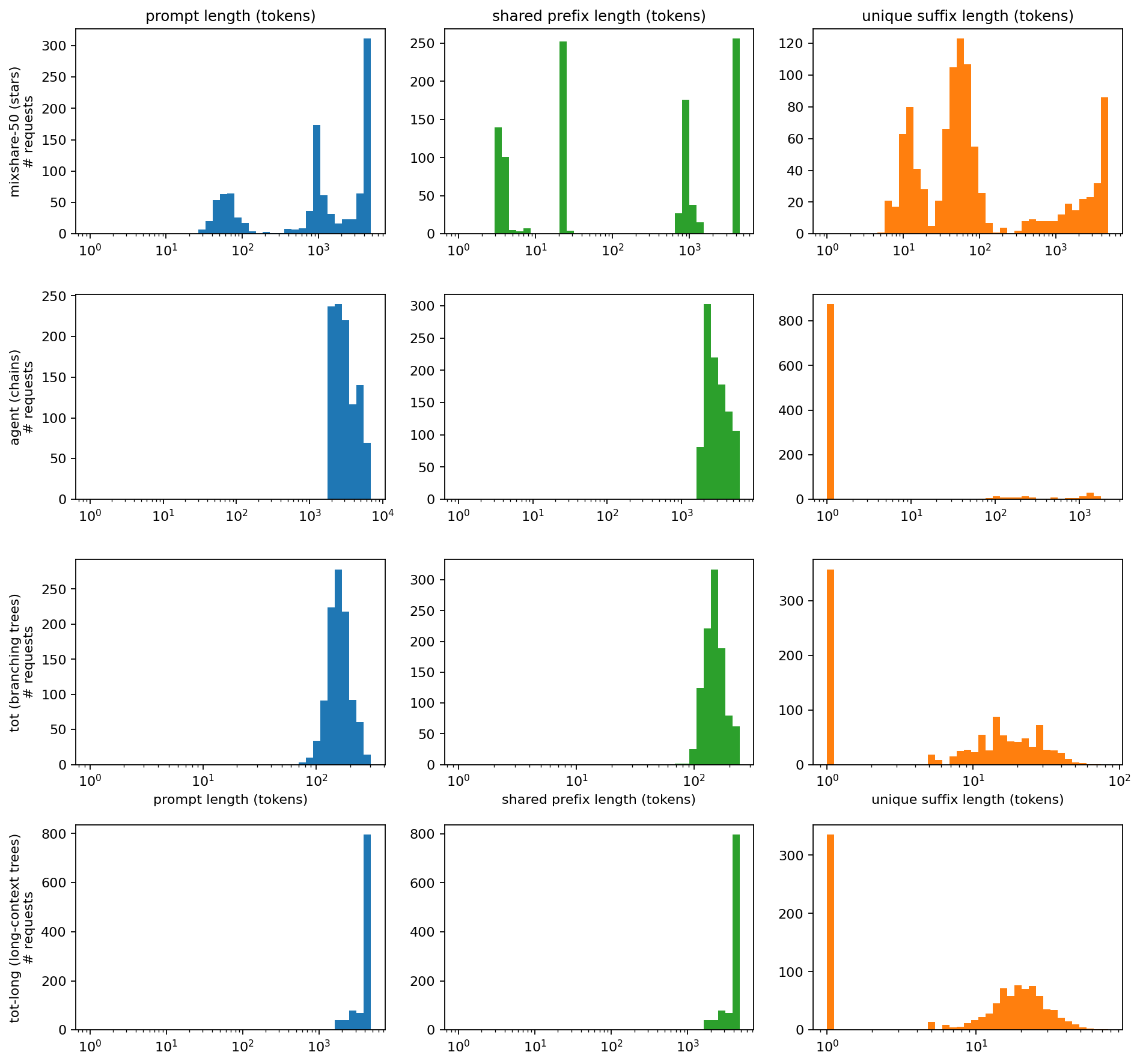}
    \caption{Per-query distributions of prompt, shared prefix, and
    unique suffix length (log scale).}
    \label{fig:workloads}
\end{figure}

\subsection{Batch Size and Sharing Fraction}\label{app:stars}

Figure~\ref{fig:raw} gives the raw welfare and TTFT behind the first
two columns of Figure~\ref{fig:grid}. In the sharing sweep at $n=256$,
nothing can be cached at $P=0$ and the algorithms differ only through
the order itself. As $P$ grows, the completion time of the two DFS orders falls
from about 560\,s to 330\,s while that of \bidsort\ and \nosort\ rises
to 740\,s. In the batch-size sweep on \mixshare{50}, \nosort\ does not
depend on $n$ by construction, which is a useful consistency check;
for \dfsbid, the hit rate goes from $0.23$ at $n=32$ to $0.60$ at $n=Q$
and mean TTFT from 316\,s to 160\,s. The ratio of the best DFS welfare
to the unconstrained optimum falls gently with sharing, from $0.83$ at
$P=0$ to $0.78$ at $P=75\%$, as deeper trees hit the DFS restriction
more often. While one might expect \dfsbid\ and \bidsort\ to agree at $P=0$,
singletons in our dataset contain a very short shared instruction,
causing us to serve one group before another.

\begin{figure}[htbp]
    \centering
    \includegraphics[width=0.8\linewidth]{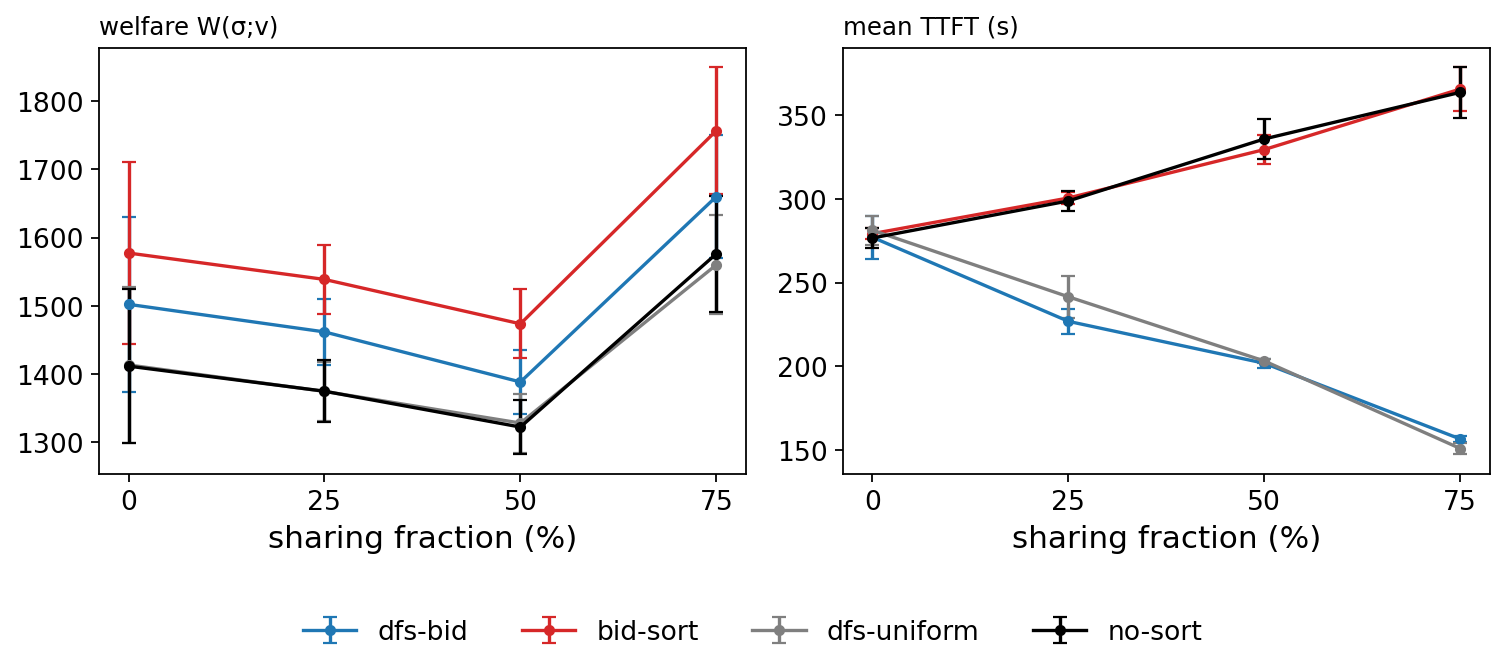}\\[2pt]
    \includegraphics[width=0.8\linewidth]{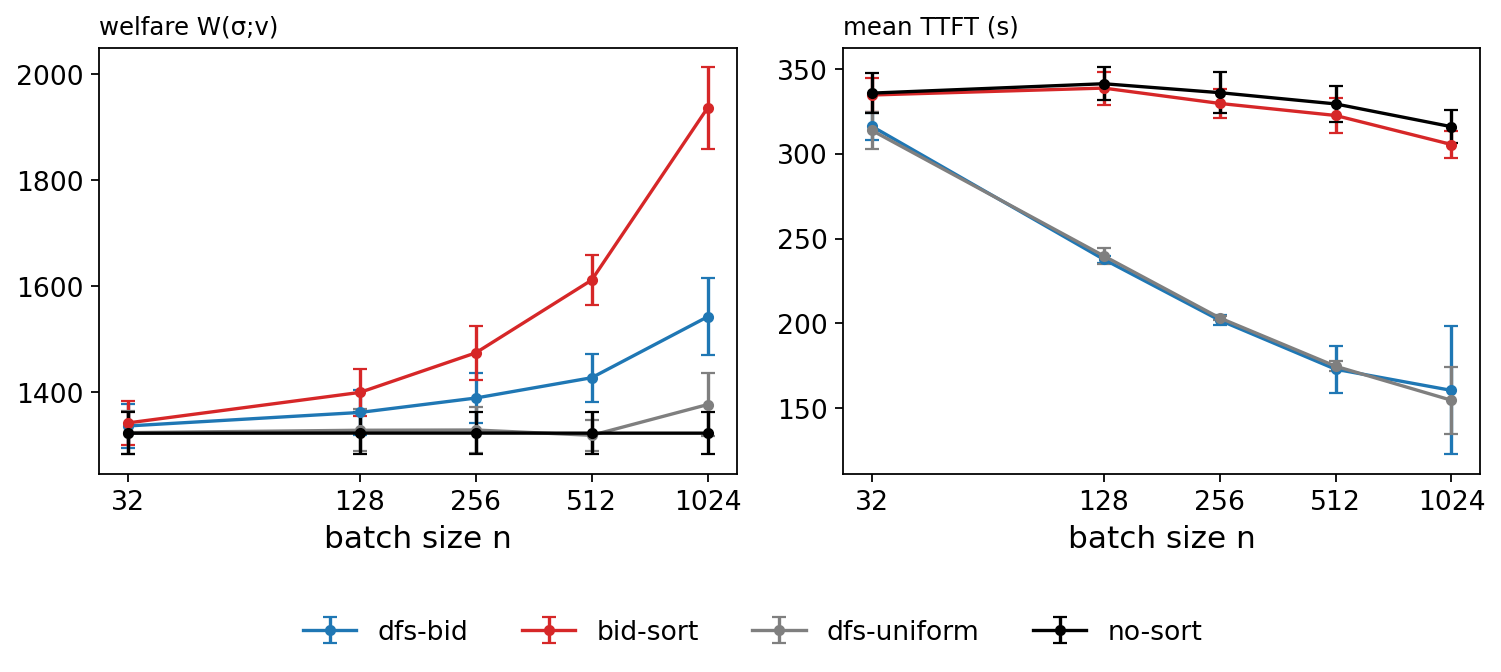}
    \caption{Raw welfare and mean TTFT for the sharing-fraction (top)
    and batch-size (bottom) sweeps of Figure~\ref{fig:grid}.}
    \label{fig:raw}
\end{figure}

\subsection{Different Tree Topologies}\label{app:shapes}

Figures~\ref{fig:agent}, \ref{fig:tot}, and~\ref{fig:totlong} vary
the batch size on different radix tree topologies: chains, short trees, and long trees. On chains at
$n=Q$, \dfsbid\ beats \dfsuniform\ by $16\%$ at the same hit rate and
TTFT, while \bidsort\ wins welfare by $48\%$ and pays with a $0.48$ hit
rate (versus \dfsbid's $0.75$) and $2.4\times$ the TTFT. \nosort\ and \bidsort\
have the same hit rate at every $n$, since sorting a random order by
bid leaves it random from the cache's point of view. Short trees fit
in the cache, so nothing is evicted. Here, every algorithm has the same 28\,s
TTFT. 
Long trees are where the DFS restriction of our inference auction has the largest impact. \dfsbid\ keeps a $0.96$ hit rate and a 46\,s TTFT, and \bidsort\
drops to $0.20$ and 559\,s, because every branch it interleaves
recomputes a 4k-token prefix.

\begin{figure}[htbp]
    \centering
    \includegraphics[width=\linewidth]{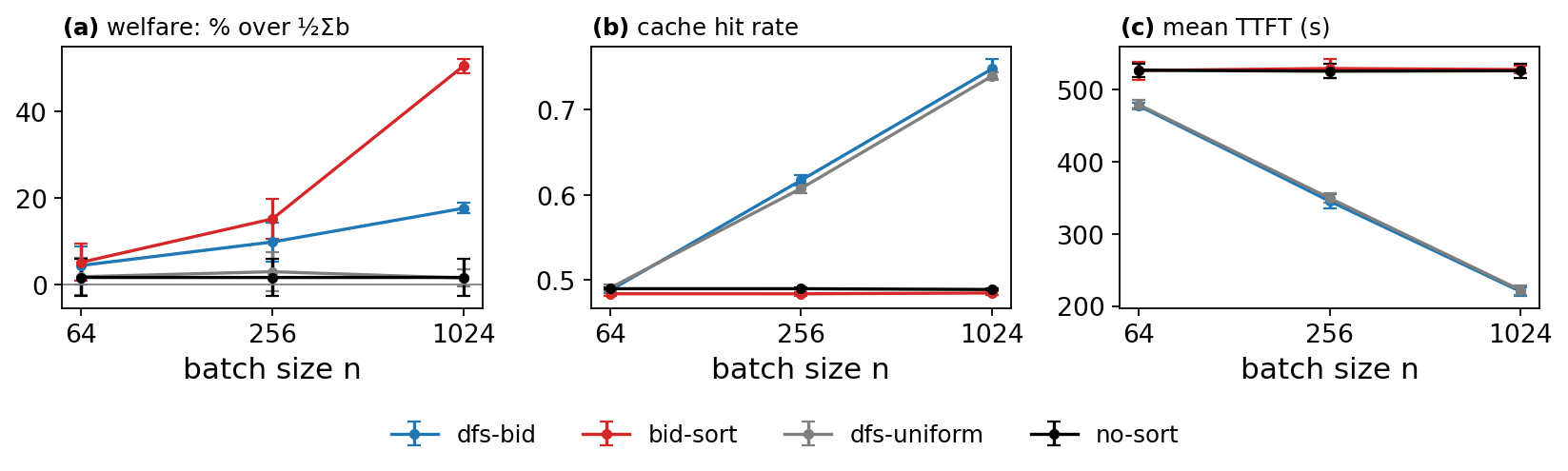}
    \caption{Chains (\agent) against batch size $n$.}
    \label{fig:agent}
\end{figure}

\begin{figure}[htbp]
    \centering
    \includegraphics[width=\linewidth]{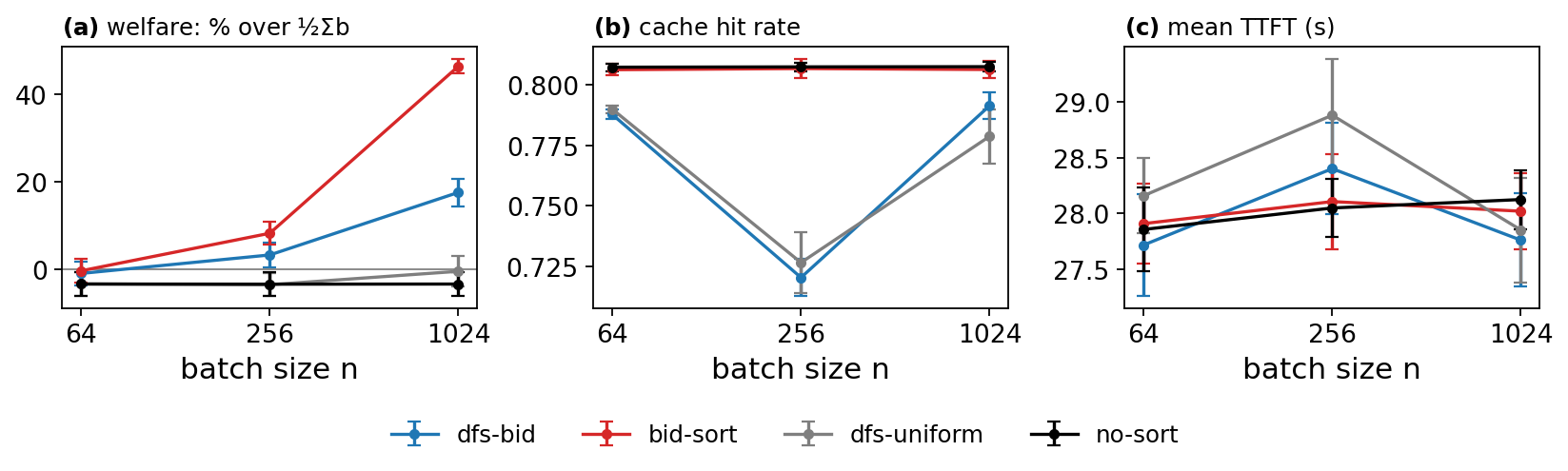}
    \caption{Short trees (\tot) against batch size $n$. The workload
    fits in the cache, so hit rate and TTFT do not depend on the order.}
    \label{fig:tot}
\end{figure}

\begin{figure}[htbp]
    \centering
    \includegraphics[width=\linewidth]{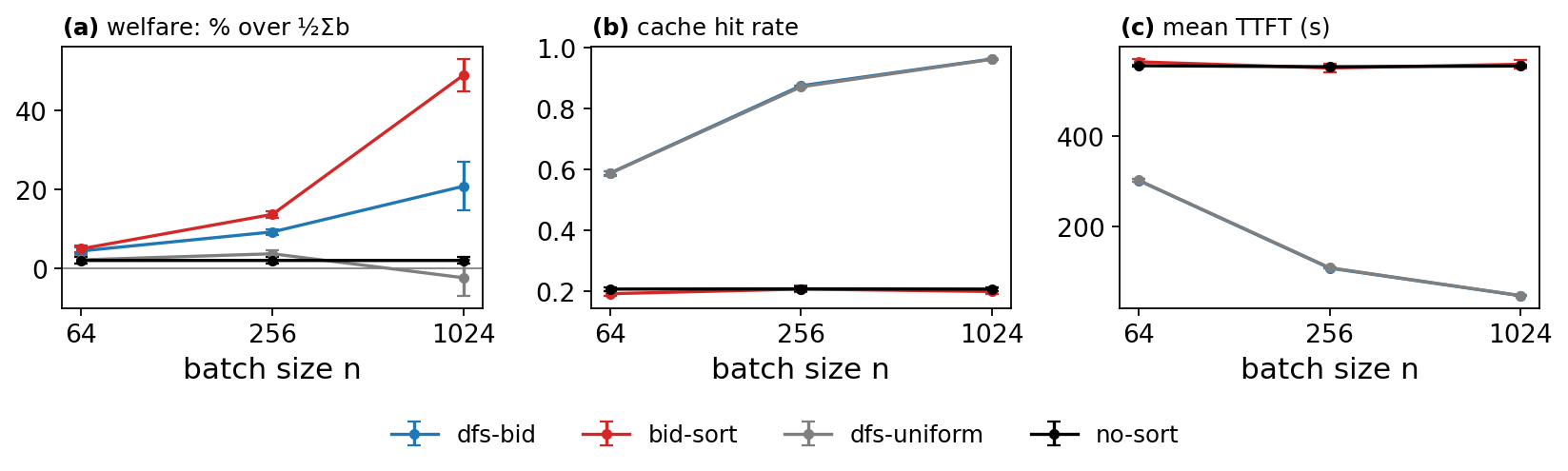}
    \caption{Long trees (\totlong) against batch size $n$, second
    machine.}
    \label{fig:totlong}
\end{figure}

\subsection{Correlated Values}\label{app:corr}

The main results draw bids i.i.d.\ per query, which is the case of
many users sharing a prefix with unrelated values and the hardest case
for \dfsbid. To interpolate toward the case where one user owns a whole
cluster, we draw values from a Gaussian copula with within-cluster
correlation $\rho$ and the same Pareto marginal. A cluster is a set of
queries sharing at least 128 prefix tokens on stars, a trajectory on
chains, and a tree on trees; singletons are unaffected. Each cluster $c$
draws $Z_c\sim\mathcal{N}(0,1)$ and each query $r$ in it draws
$\epsilon_r\sim\mathcal{N}(0,1)$; its value is
$v_r = 1 + (1-\Phi(z_r))^{-1/1.5}$ with
$z_r=\sqrt{\rho}\,Z_c+\sqrt{1-\rho}\,\epsilon_r$. So $\rho=0$ is i.i.d.\
and $\rho=1$ gives every query in a cluster the same value.

Figure~\ref{fig:offline} plots welfare against $\rho$
on stars and the right column of Figure~\ref{fig:grid} displays the engine
results at $\rho\in\{0,0.5,1\}$. \dfsbid\ gains as $\rho$ grows because
the value-sorted order and the tree-contiguous order agree more often:
the best DFS welfare rises from $0.83$ to $0.91$ of the unconstrained
optimum. \bidsort\ does not gain, and it still thrashes the cache at
$\rho=0.5$ (hit rate $0.29$ against $0.60$). At $\rho=1$ its order
happens to keep clusters together and its hit rate matches \dfsbid's,
but \dfsbid\ is still faster (mean TTFT 85\,s against 124\,s), because
it serves short high-value clusters first.

Figure~\ref{fig:corrshapes} repeats this experiment on chains and long trees
at $n=Q$. On long trees at $\rho=1$ the two optima coincide: \dfsbid\
reaches $0.999$ of the unconstrained welfare at the same hit rate and
TTFT as \bidsort. On chains it reaches $0.90$, and the remaining gap is
the cost of the DFS restriction when trajectories on the same task
share interior nodes with other users. On both, \bidsort\ recovers the
cache only at $\rho=1$; at $\rho=0.5$ its TTFT is still $2\times$
\dfsbid's on chains and $10\times$ on long trees.

\begin{figure}[htbp]
    \centering
    \includegraphics[width=\linewidth]{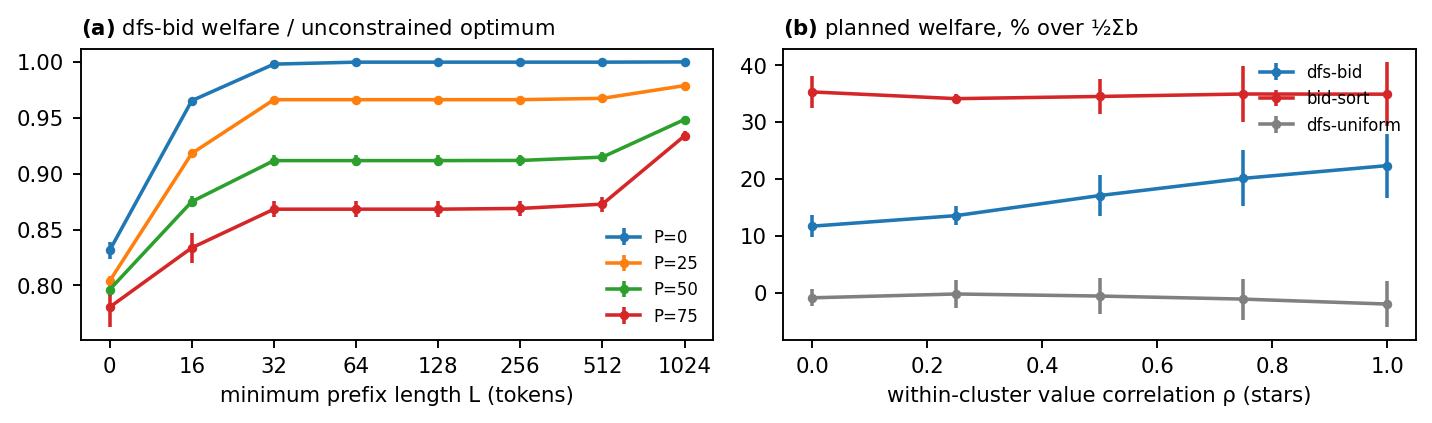}
    \caption{Offline sweeps on the planned orders, three trials. (a)
    \dfsbid's welfare as a fraction of the unconstrained optimum
    against the minimum-prefix threshold $L$, per sharing fraction. (b) Planned welfare
    of each algorithm against within-cluster value correlation $\rho$.}
    \label{fig:offline}
\end{figure}

\begin{figure}[htbp]
    \centering
    \includegraphics[width=\linewidth]{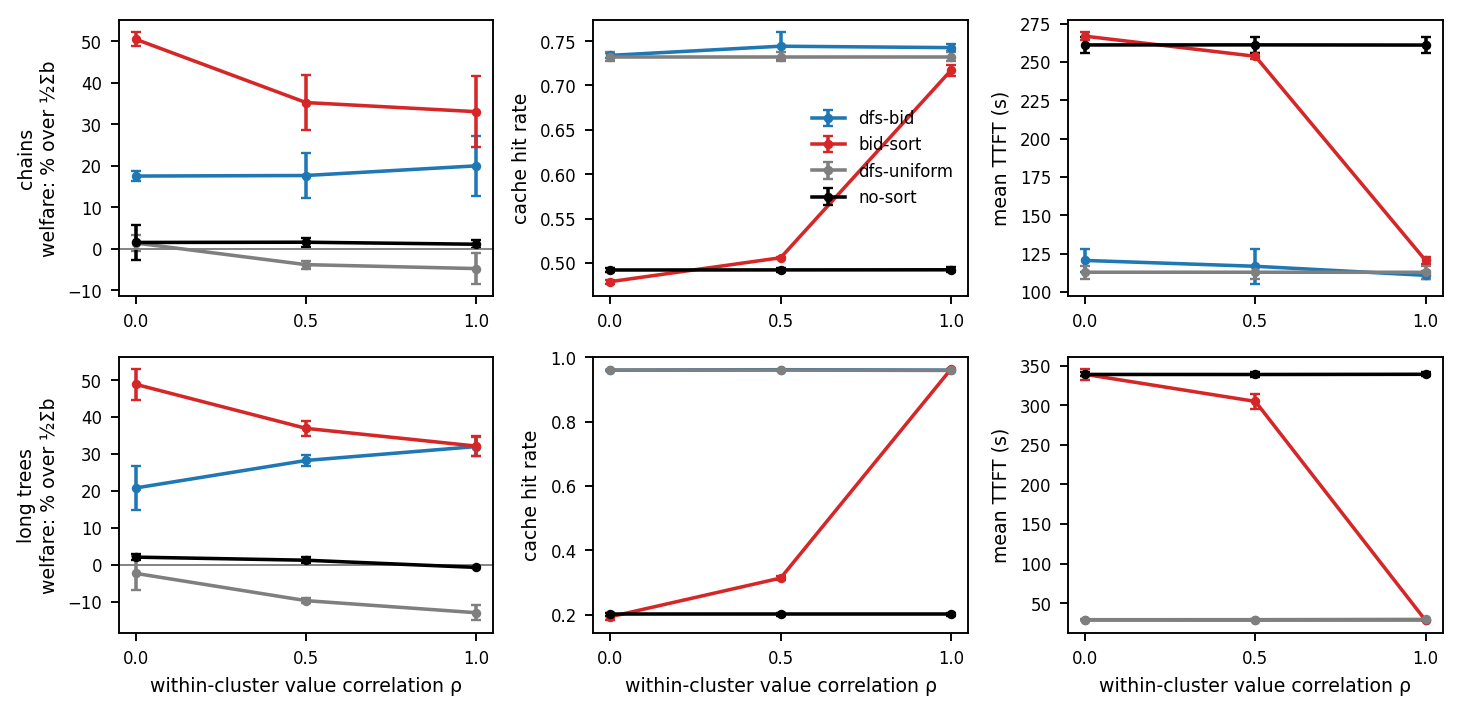}
    \caption{Within-cluster value correlation $\rho$ on chains (top) and
    long trees (bottom) at $n=Q$, third machine.}
    \label{fig:corrshapes}
\end{figure}

\subsection{One Cluster Bids}\label{app:juice}

We consider now a situation similar to that of the highest-correlation setting of the experiments in Figures~\ref{fig:offline} and~\ref{fig:corrshapes} (though in the original setting of i.i.d. Pareto bids).
We draw one cluster of queries with a shared
prefix at random among the clusters with at least eight members (32 to
58 queries on these instances). That cluster keeps its Pareto bids. Every
other bid is set to zero (Figure~\ref{fig:juice}). 

This bid distribution is meant to model a realistic scenario where most users do not submit any bid for priority (such users still participate in the auction and must be served in a timely fashion).

Serving the cluster with non-zero bids first corresponds to both the unconstrained
optimum and a DFS order, so \dfsbid\ matches \bidsort's welfare at every
where the paying cluster gets its first tokens in about a
second under either algorithm. The margin over the bid-blind algorithms
grows with $n$, from nothing at $n=32$, where a batch rarely holds more
than one or two members of the cluster, to about $100$ points at $n=Q$.
\dfsbid's hit rate and completion time equal \dfsuniform's at
every $n$, while \bidsort's hit rate stays near $0.25$ and its TTFT is
about $1.5\times$ at every $n$ above $32$.\looseness-1

\subsection{Online Arrivals}\label{app:online}

The main protocol has the whole queue present at $t=0$, so the
batch-size sweep in Figure~\ref{fig:grid} shows the benefit of a larger
batch but not its cost (the wait for the batch to fill). Here the $1024$
queries of \mixshare{50} arrive as a Poisson process at 4 or 1 per
second, a batch is released when its $n$-th query arrives, and TTFT is
measured from arrival. \nosort\ is batched the same way, so the batching
cost is common to all algorithms. Table~\ref{tab:online} reports means
over three trials.

\begin{table}[htbp]
\centering\small
\begin{tabular}{llrrrr}
\toprule
rate & $n$ & wait & \multicolumn{2}{c}{\dfsbid} & \nosort \\
 & & & mean TTFT & hit & mean TTFT \\
\midrule
4/s & 64  & 8   & 150 & 0.33 & 211 \\
4/s & 512 & 63  & 160 & 0.56 & 314 \\
1/s & 64  & 31  & 45  & 0.30 & 47 \\
1/s & 512 & 254 & 328 & 0.56 & 406 \\
\bottomrule
\end{tabular}
\caption{Poisson arrivals on \mixshare{50}. ``Wait'' is the mean time
from a query's arrival to the release of its batch; all times in
seconds.}
\label{tab:online}
\end{table}

At 4 queries per second the server is saturated: the queue arrives in
about 250\,s and takes over 500\,s to serve. Going from $n=64$ to
$n=512$ adds 55\,s of batch wait but changes \dfsbid's mean TTFT by
between $-7$ and $+29$\,s across trials, because the added wait mostly
replaces time the query would have spent waiting in the engine, while
the hit rate rises from $0.33$ to $0.56$. The same change costs
\nosort\ 100\,s and buys it nothing. At 1 query per second the server
keeps up with arrivals, there is no queue to hide the wait, and $n=512$
costs about 280\,s of TTFT. Under load a large batch is nearly free;
under light load a timeout should release batches long before they
fill.

\subsection{Second Model}\label{app:model2}

Table~\ref{tab:model2} repeats our main experiments with
Qwen2.5-7B-Instruct~\citep{qwen2025qwen25} in place of the 32B model,
on the same instances and with the KV cache capped at the same number
of tokens. The two models share a tokenizer, so the trees are
identical. Welfare agrees to the unit in every experiment and hit rates
are very similar; time metrics are smaller
due to the smaller size of the model.\looseness-1

\begin{table}[htbp]
\centering\small
\begin{tabular}{llrrrrrr}
\toprule
 & & \multicolumn{2}{c}{welfare} & \multicolumn{2}{c}{hit rate} & \multicolumn{2}{c}{completion time (s)} \\
experiment & algorithm & 7B & 32B & 7B & 32B & 7B & 32B \\
\midrule
\mixshare{50}, $n=Q$ & \dfsbid  & 1541 & 1542 & 0.60 & 0.60 & 87 & 319 \\
                     & \bidsort & 1936 & 1936 & 0.22 & 0.23 & 163 & 615 \\
                     & \dfsuniform & 1376 & 1376 & 0.60 & 0.59 & 91 & 322 \\
                     & \nosort  & 1322 & 1322 & 0.21 & 0.21 & 165 & 629 \\
\mixshare{75}, $n=256$ & \dfsbid & 1660 & 1660 & 0.70 & 0.69 & 81 & 327 \\
                     & \bidsort & 1757 & 1757 & 0.29 & 0.28 & 188 & 739 \\
\agent, $n=Q$        & \dfsbid  & 1853 & 1854 & 0.82 & 0.75 & 77 & 438 \\
                     & \bidsort & 2375 & 2375 & 0.49 & 0.48 & 227 & 1076 \\
\tot, $n=Q$          & \dfsbid  & 1686 & 1686 & 0.86 & 0.79 & 54 & 61 \\
                     & \bidsort & 2098 & 2098 & 0.78 & 0.81 & 53 & 60 \\
\bottomrule
\end{tabular}
\caption{Qwen2.5-7B-Instruct against the 32B model on identical
instances; means over three trials.}
\label{tab:model2}
\end{table}

\subsection{Other Value Distributions}\label{app:valdist}

Figure~\ref{fig:valdist} repeats the \mixshare{50} and
chains experiments at $n=Q$ with bids pulled from different distributions, 
namely uniform and lognormal.
The cache hit rate and TTFT does not
change, since they depend on the tree and not on the bids. \dfsbid's
gain over \dfsuniform\ is positive and significant under every
distribution and grows with the tail, as does \bidsort's gain over
\dfsbid.

\begin{figure}[htbp]
    \centering
    \includegraphics[width=0.7\linewidth]{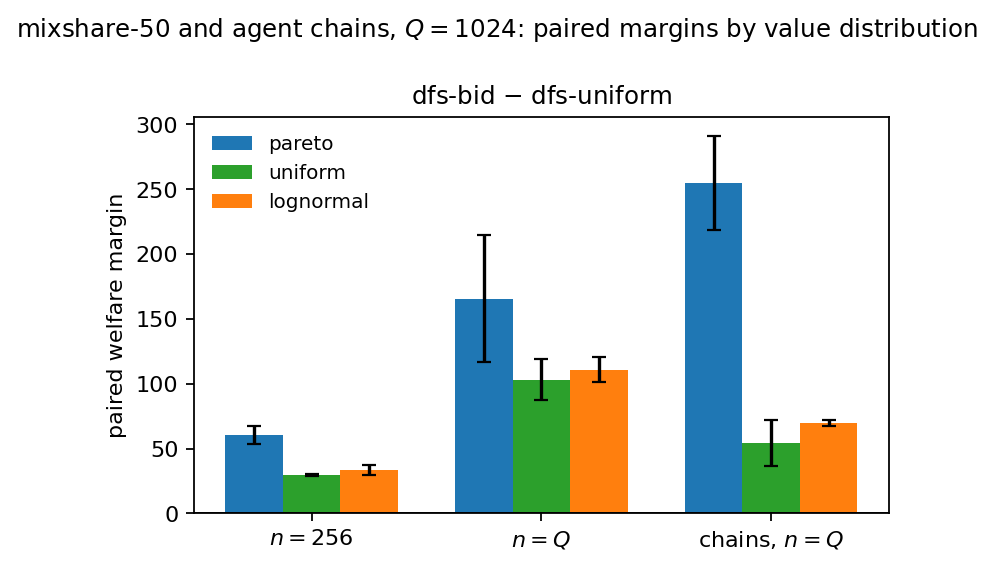}
    \caption{Welfare gain on
    \mixshare{50} and \agent\ under Pareto, uniform on $[1,5]$, and
    lognormal ($\sigma=0.5$) values with the same mean.}
    \label{fig:valdist}
\end{figure}

\subsection{Fast VCG Payments}\label{app:fastvcg}

The experiments compute payments by rerunning the allocation once per
user, which incurs a quadratic run-time. We also implemented the algorithm of
Theorem~\ref{thm:payments}, which re-sorts only the affected children
on a user's root-to-leaf paths. Its payments agree with the naive
computation to $10^{-12}$ on random trees and on our stored instances,
and at $n=1024$ it takes 34\,ms while the naive algorithm takes 370\,ms on average.

\subsection{Autobidding}\label{app:autobid}

We simulate Algorithm~\ref{alg:ogd} over
$T=60$ batches of $n=256$. For stars, each shared-prefix cluster belongs to a distinct user. For chains, each trajectory belongs to a distinct user. Half the users pace toward a target spend
of half their mean truthful payment with step size $\eta=0.1$
normalized by the target; the rest bid truthfully.
Figure~\ref{fig:autobid} shows the usual behavior of dual descent with
a fixed step: spend runs above the target while the multiplier is
small and below it once the multiplier has grown. Welfare under pacing
averages $0.91$ of the DFS optimum at true values on stars and $0.93$
on chains, and the bound of Theorem~\ref{thm:condition} holds in every
batch.

\begin{figure}[htbp]
    \centering
    \includegraphics[width=\linewidth]{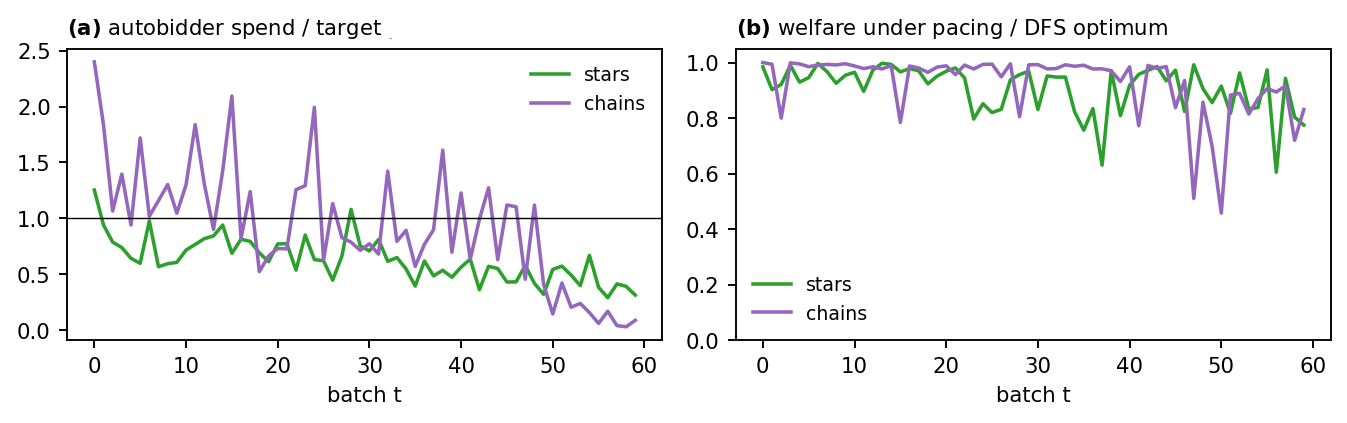}
    \caption{Pacing autobidders over $T=60$ batches of $n=256$, half
    the users pacing and half truthful. (a) Spend relative to the
    per-batch target. (b) Welfare under pacing as a fraction of the
    DFS optimum at true values.}
    \label{fig:autobid}
\end{figure}

\end{document}